\providecommand{\algorithmname}{Algorithm}
\newtheorem{lemma}{Lemma}
\begin{document}

\title{Semi-asynchronous Hierarchical Federated Learning for Cooperative Intelligent Transportation Systems}

\author{Qimei Chen, Zehua You, and Hao Jiang,
\thanks{Q. Chen, Z. You, and H. Jiang are with the School of Electronic Information, Wuhan University, Wuhan 430072, China (e-mail: \{chenqimei; youzehua; jh\}@whu.edu.cn)}}


\maketitle

\begin{abstract}
\emph{Cooperative Intelligent Transport System} (C-ITS) is a promising network to provide safety, efficiency, sustainability, and comfortable services for automated vehicles and road infrastructures by taking advantages from participants.
However, the components of C-ITS usually generate large amounts of data, which makes it difficult to explore data science.
Currently, federated learning has been proposed as an appealing approach to allow users to cooperatively reap the benefits from trained participants.
Therefore, in this paper, we propose a novel \emph{Semi-asynchronous Hierarchical Federated Learning} (SHFL) framework for C-ITS that enables elastic edge to cloud model aggregation from data sensing.
We further formulate a joint edge node association and resource allocation problem under the proposed SHFL framework to prevent personalities of heterogeneous road vehicles and achieve communication-efficiency.
To deal with our proposed \emph{Mixed integer nonlinear programming} (MINLP) problem, we introduce a distributed \emph{Alternating Direction Method of Multipliers} (ADMM)-\emph{Block Coordinate Update} (BCU) algorithm. With this algorithm, a tradeoff between training accuracy and transmission latency has been derived.
Numerical results demonstrate the advantages of the proposed algorithm in terms of training overhead and model performance.
\end{abstract}

\begin{IEEEkeywords}
C-ITS, federated learning, asynchronous, edge association, resource allocation.
\end{IEEEkeywords}

\IEEEpeerreviewmaketitle

\section{Introduction}
With the rapid developments of automated vehicles and Internet-of-Vehicles, \emph{Cooperative Intelligent Transport Systems} (C-ITS) is expected to provide safety, traffic efficiency, and comfortable infotainment services for vehicles with an accurate neighborhood view \cite{C-ITS1}. Specifically, the C-ITS components including vehicles, road infrastructure units, personal devices, and traffic command centers can achieve additional traffic management and convenience with the help of enhanced visions from surrounding participants \cite{C-ITS2}.

Currently, both the \emph{Wireless Access in Vehicular Environments} (WAVE) and the \emph{European Telecommunications Standards Institute} (ETSI) standards have been proposed consecutively to define spectrum allocation, traffic parameters setting, and protocol stack to enable \emph{Quality-of-Service} (QoS) and security procedure for C-ITS \cite{C-ITS-protocol}. Several works have also studied the technologies for C-ITS. 
The authors in \cite{C-ITS-urban} introduce a choreography-based service composition platform to accelerate the reuse-based development for a choreography-based urban coordinative application.
The authors in \cite{a2} proposed a novel topological graph convolutional network to predict the
urban traffic flow and traffic density.

The authors in \cite{C-ITS-QoS} analyze the relationship among security, QoS, and safety for vehicles in C-ITS.  
In addition, the authors in \cite{C-ITS-service} investigate service-oriented cooperation models and mechanisms for autonomous vehicles in C-ITS.
Nevertheless, they have neglected to explore the large amounts of traffic data generated from transportation sensor networks, which is essential for advanced C-ITS applications like connected and autonomous vehicles, traffic control and prediction, and road safety. Although some attempts have been made to utilize data science for C-ITS \cite{C-ITS-data1}, there still exists various stuff challenges to be solved, such as privacy protection, computational complexity, and data heterogeneity.

With the significant advance cloud computing, big data, and machine learning, intelligent transportation is a trend and requirement.
As one of the most dominant distributed machine learning technologies, \emph{Federated Learning} (FL) has been widely studied to deal with the data science recently \cite{FL2,FL3,FL4}. The FL technology allows participant devices to collaboratively build a shard model while preserving privacy data locally \cite{FL5}. Particularly, the prevalent FL algorithm, namely federated averaging, allows each device to train a model locally with its own dataset, and then transmits the model parameters to the central controller for a global aggregation \cite{federated-average}.
Intuitively, FL presents a great potential for C-ITS to facilitate the large data management.
However, directly applying FL to C-ITS still faces three major deficiencies: 1) limited wireless resources; 2) high latency; 3) obliterated data diversity.

According to \cite{FL_MEC_survey,edge-computing}, \emph{Federated Learning training at Edge networks} (FEL) has been regarded as a solution to facilitate the above limitations through bringing model training closer to the data produced locally. Compared with the conventional cloud centric FL approaches, the implementation of FEL can provide high wireless resources utilization since less information is required to be transmitted to the cloud. In addition, FEL has a much lower transmission latency and higher privacy than the conventional FL by making decisions at the edge nodes. In \cite{FEL1}, the authors develop an importance aware joint data selection and resource allocation algorithm to maximize the resource and learning efficiencies. Meanwhile, the authors in \cite{FEL2} propose an adaptive federated learning mechanism in resource constrained edge computing systems. Along the FEL, the authors in \cite{HFEL} propose a novel \emph{Hierarchical Federated Edge Learning} (HFEL) framework, where edge servers deployed with base stations fixedly and can upload edge aggregation model to the cloud. The above HFEL enables great potentials in low latency and energy efficiency.

However, the participant C-ITS devices usually have heterogeneous resources, which lead to \emph{non-independent-identically distributed} (non-iid) private data during the communication \cite{non-IID1,non-IID2}. The existing federated learning methods mainly utilize the synchronous model aggregation mechanism, where the central server needs to wait for the slowest device to complete the training in each communication round \cite{fedavg1,EE-FL}. With heterogeneous data, the transmission latency of each synchronous model aggregation mechanism is unacceptable for time-sensitive devices.
In this way, several works have proposed asynchronous model aggregation methods, where only one participant device would update the global model each time \cite{asyFL1,asyFL2,asyFL3}. Nonetheless, the training round under asynchronous methods is several times than that of synchronous methods. Moreover, due to the asynchrony, gradient staleness may be difficult to control \cite{asyFL3}. Therefore, the authors in \cite{Fed-sensing} design an $n$-softsync aggregation model that can significantly reduce training time by combines the benefits of both synchronous and asynchronous aggregations.

Inspired by the above analyses, we aim to leverage a novel \emph{Semi-asynchronous Hierarchical Federated Learning} (SHFL) framework that can provide safety, efficiency, sustainability, and comfortable services to C-ITS. Specifically, the proposed SHFL framework consists of both edge and cloud layers, where each edge node aggregates all of homogeneous local models and the cloud layer aggregates parts of heterogeneous edge models.  These selected nodes would update the global model once the selected slowest node finishes training, which combines the merits of both synchronous and asynchronous aggregations. For further performance enhancement, we formulate a joint edge node association and resource allocation optimization problem to prevent heterogeneous edge node personalities as well as ensure communication-efficient of the whole system. The objective function is a \emph{Mixed Integer NonLinear Programming} (MINLP) problem, which has been solved by a distributed \emph{Alternating Direction Method of Multipliers} (ADMM)-\emph{Block Coordinate Update} (BCU) algorithm. It is shown that the proposed algorithm can achieve near optimal with low computational complexity.
To the best of our knowledge, this is the first work that applying improved FL into the C-ITS.

Overall, the main contributions of this work can be listed as follows.
\begin{itemize}
  \item We propose a novel SHFL framework by applying the synchronous aggregation model for local-edge and the semi-asynchronous aggregation model for edge-cloud to provide safety, traffic efficiency, and comfortable infotainment services for C-ITS.

  \item To reserve the personalities of heterogeneous edge nodes, we introduce an elastic edge model update method based on the distance between the global model and the edge model.

  \item We formulate a joint edge node association and resource allocation problem to achieve communication-efficiency by achieving a tradeoff between training accuracy and transmission latency. A distributed ADMM-BCU algorithm has been used to solve the MINLP problem.
\end{itemize}

The rest of this paper is organized as follows. Section II introduces the system model and the SHFL learning mechanism. In Section III, we formulate the communication-efficient problem. A joint edge node association and resource allocation strategy is presented in Section IV. Section V presents the numerical results, followed by the conclusions in Section VI.

\section{System Model}
In this work, we aim to design a novel SHFL framework for C-ITS that contains three layers, namely the cloud layer, the edge layer, and the local layer, as shown in Fig. \ref{system model}. Here, we consider the vehicle devices have heterogeneous data structures, namely the local datasets are non-iid.  We let homogeneous devices with similar data size, network bandwidth, and QoS gather in the same edge node. Hence, the edge nodes are heterogeneous. A shared \emph{Deep Neural Network} (DNN) model is distributed over the local devices, which has been trained collaboratively across the devices under their datasets.
Different from conventional FLs, the proposed SHFL framework allows devices train their data locally, homogeneous devices report their computed parameters to the same edge node synchronously, and heterogeneous edge nodes upload their models to the cloud node semi-asynchronously, which can preserve data privacy as well as improve communication efficiency.
In the proposed framework, we assume there has a set of $K$ edge nodes $\mathcal{K}=\{1,...,K\}$.
Any edge node $k$ consists of a set of $N_k$ local devices, denoted as $\mathcal{N}_k=\{L_{k,1},...,L_{k,N_k}\}$.
Under edge node $k\in\mathcal{K}$, local device $n\in\mathcal{N}_k$ owns a local data set $D_{k,n}=\{(\boldsymbol {x}_{j,k,n},{y}_{j,k,n}):j=1,...,|D_{k,n}|\}$, where $\boldsymbol {x}_{j,k,n}$ is the $j$-th input training data sample, ${y}_{j,k,n}$ is the $j$-th corresponding output, and $|D_{k,n}|$ denotes the cardinality of the data set $D_{k,n}$. For simplicity, we assume the SHFL algorithm with a single output. However, this work can be extended to the multiple outputs case.
In what follows, we would introduce each part of the proposed SHFL framework at the $t$-th iteration.
\begin{figure}
  \centering
  \includegraphics[width=0.43\textwidth]{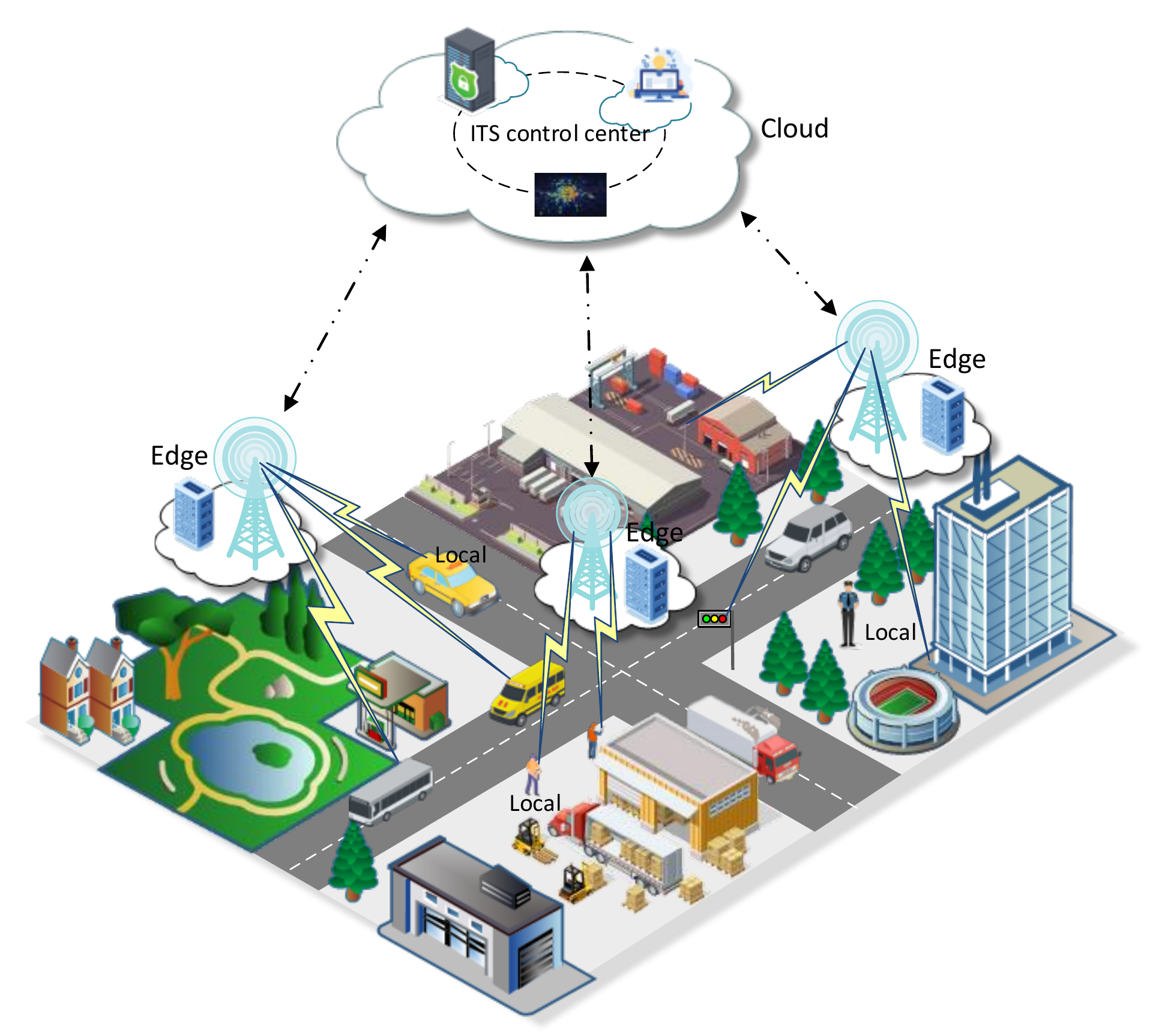}\\
  \caption{Illustration for the SHFL based C-ITS.}\label{system model}
\end{figure}

\subsection{Edge Aggregation}
The edge aggregation stage contains three processes, including local model computation, local model transmission, and edge model aggregation. In detail, local model first trained by local data, then local models respectively transmit to their associated edge nodes for edge aggregation. The detailed processes are as follows.
\subsubsection{Local Model Computation}
Without loss of generality, we consider a supervised machine learning task on device $n\in\mathcal{N}_k$ associated with edge node $k\in\mathcal{K}$, which has a learning model of $\boldsymbol{w}_{k,n}$. We further define $f_{n}(\boldsymbol {x}_{j,k,n}, y_{j,k,n}, \boldsymbol{w}_{k,n})$ as the loss function of data sample $j$ that quantifies the prediction error between data sample $\boldsymbol{x}_{j,k,n}$ and output $y_{j,k,n}$. In this work, we mainly focus on the logistic regression model for the loss function, i.e., $f_{n}(\boldsymbol {x}_{j,k,n}, y_{j,k,n}, \boldsymbol{w}_{k,n})=-\log\left(1+\exp\left(-y_{j,k,n}\boldsymbol {x}_{j,k,n}^{\mathrm{T}}\boldsymbol{w}_{k,n}\right)\right)$. Hence, the loss function of device $n\in\mathcal{N}_k$ associated with edge node $k\in\mathcal{K}$ on dataset $D_{k,n}$ can be defined as
\begin{equation}
\begin{split}
F_{k,n}( \boldsymbol{w}_{k,n})=\frac {1}{|D_{k,n}|}\sum _{j=1}^{|D_{k,n}|} f_{k,n}(\boldsymbol {x}_{j,k,n},y_{j,k,n}, \boldsymbol{w}_{k,n}),
\\~\forall k\in \mathcal{K}, n\in \mathcal{N}_k.
\end{split}
\end{equation}

The local update model of device $n\in\mathcal{N}_k$ in edge node $k\in\mathcal{K}$ can be achieved by
\begin{equation}\label{local_update}
\begin{split}
  \boldsymbol{w} _{k,n}^{t}= \boldsymbol{w} _{k,n}^{t-1}-\eta \nabla F_{k,n}( \boldsymbol{w} _{k,n}^{t-1}),~\forall k\in \mathcal{K}, n\in \mathcal{N}_k,
\end{split}
\end{equation}
where $\eta$ is a predefined learning rate.

Define $C_{k,n}$ as the number of CPU cycles for local device $n\in\mathcal{N}_k$ associated with edge node $k\in\mathcal{K}$ to process one sample data. Assuming each sample data has the same size, the total CPU cycles to run one local iteration is $C_{k,n}|D_{k,n}|$. We further let $f_{k,n}$ be the computation frequency of device $n\in\mathcal{N}_k$ in edge node $k\in\mathcal{K}$. In this way, the related local gradient calculation latency in one round can be formulated as
\begin{equation}
T_{k,n}^{c} = \dfrac {C_{k,n} |D_{k,n}|}{f_{k,n}},~\forall k\in \mathcal{K}, n\in \mathcal{N}_k.
\end{equation}

\subsubsection{Local Model Transmission}
We adopt the \emph{Orthogonal-Frequency-Division Multiple Access} (OFDMA) technique for local uplink transmissions. Define $B_{k,n}$ as the bandwidth allocated to device $n\in\mathcal{N}_k$. Therefore, we have $\sum_{n=1}^{\mathcal{N}_k} B_{k,n} = B_{k}$, where $B_{k}$ is the bandwidth allocated to edge node $k\in\mathcal{K}$ for the transmission between edge node $k\in\mathcal{K}$ and the associated local devices. Meanwhile, we have $\sum_{k=1}^{K}B_{k}\leq B_{e}$, where $B_{e}$ is the total bandwidth allocated for the communication between edge nodes to the local devices.  Therefore, the achievable local uplink data rate from device $n\in \mathcal{N}_k$ to edge node $k\in\mathcal{K}$  can be formulated as
\begin{equation}
 r_{k,n}^{u}= B_{k,n} \log _{2} \left ({1+ \frac{P_{k,n}g_{k,n}}{B_{k,n}N_{0}}}\right)\!,~\forall k\in \mathcal{K}, n\in \mathcal{N}_k,
\end{equation}
where $P_{k,n}$ is the uplink transmission power of device $n\in \mathcal{N}_k$ in edge node $k\in\mathcal{K}$, $g_{k,n}$ denotes the channel gain between local device $n\in \mathcal{N}_k$ and edge node $k\in\mathcal{K}$, and $N_{0}$ means the noise power.

Similarly, the achievable downlink data rate for device $n\in \mathcal{N}_k$ associated with edge node $k\in\mathcal{K}$ can be expressed as
\begin{equation}
r_{k,n}^{d}= B_{k} \log _{2} \left ({1+ \frac{P_{k}g_{k,n}}{B_{k}N_{0}}}\right)\!,~\forall k\in \mathcal{K},  n\in \mathcal{N}_k,
\end{equation}
where $P_{k}$ is the downlink transmission power of edge node $k\in\mathcal{K}$.

In this work, we use the same training model for the whole communication system.
Therefore, the number of model parameters in each level of model transfer has the same size. Denote $Z$ as the data size of the model parameter bits. The local gradient upload latency of device $n\in \mathcal{N}_k$ in edge node $k\in\mathcal{K}$ can be expressed as
\begin{equation}
   T_{k,n}^{u}=\frac {Z}{r_{k,n}^{u}}=\frac {Z}{B_{k,n} \log _{2} \left ({1+ \frac{P_{k,n}g_{k,n}}{B_{k,n}N_{0}}}\right)\!},~\forall k\in \mathcal{K}, n\in \mathcal{N}_k.
\end{equation}

Correspondingly, the edge model download latency of device $n\in \mathcal{N}_k$ in edge node $k\in\mathcal{K}$ can be formulated as
\begin{equation}
T_{k,n}^{d}=\dfrac {Z}{r_{k,n}^{d}}=\dfrac {Z}{B_{k} \log _{2} \left ({1+ \frac{P_{k}g_{k,n}}{B_{k}N_{0}}}\right)\!},~\forall k\in \mathcal{K}, n\in \mathcal{N}_k.
\end{equation}

\subsubsection{Edge Model Aggregation}
In this work, each edge node can receive the updated model parameters from its associated homogeneous devices. Since the devices under one edge node usually have a similar type, we adopt the synchronous aggregation method to average these updated models. It means that the edge node would wait for the slowest node to complete training in each round and collect all the connected devices' updated model parameters. Therefore, the edge model aggregating equation for edge node $k\in\mathcal{K}$ can be formulated as
\begin{equation}\label{edge_aggregation}
 \boldsymbol{w} _{k}^{t}=\frac {\sum _{n=1}^{\mathcal{N}_k} |D_{k,n}| \boldsymbol{w} _{k,n}^{t}}{|D_{k}|},~\forall k\in \mathcal{K},
\end{equation}
where $|D_{k}|=\sum_{n=1}^{\mathcal{N}_k}|D_{k,n}|$ is the total number of data in edge node $k\in \mathcal{K}$.

We omit edge model aggregation time due to its strong computing capability. Hence, the computation and communication latency between each edge $k\in \mathcal{K}$ and the related local devices can be derived as
\begin{equation}
 T_{k}^{edge} = \max_{n\in \mathcal{N}_k}\left(T_{k,n}^{c}+T_{k,n}^{u}+T_{k,n}^{d}\right),~\forall k\in \mathcal{K}.
\end{equation}

\subsection{Cloud Aggregation}
Similarly, the cloud aggregation stage contains two processes, i.e., edge model transmission and cloud model aggregation. Particularly, the selected edge nodes upload their updated model parameters to the cloud for aggregation. The detailed processes are as follows.

\subsubsection{Edge Model Transmission}
Edge nodes would upload their model parameters to the cloud after edge model aggregations.
To ensure uninterrupted transmission from edge to cloud, we also adopt the OFDMA technique.
Hence, the uplink data rate for edge node $k\in\mathcal{K}$ can be expressed as
\begin{equation}
 r_{c,k}^{u}= B_{c,k} \log _{2} \left ({1+ \frac{P_{c,k}g_{c,k}}{B_{c,k}N_{0}}}\right)\!,~\forall k\in \mathcal{K},
\end{equation}
where $B_{c,k}$ is the bandwidth allocated to edge node $k\in\mathcal{K}$ transmits to the cloud node, $P_{c,k}$ is the uplink transmission power of edge node $k\in\mathcal{K}$ to the cloud node, and $g_{c,k}$ denotes the channel gain between edge node $k\in\mathcal{K}$ and the cloud node.

Correspondingly, the downlink data rate from the cloud node to edge node $k\in\mathcal{K}$ can be formulated as
\begin{equation}
r_{c,k}^{d}= B_{c} \log _{2} \left ({1+ \frac{P_{c}g_{c,k}}{B_{c}N_{0}}}\right)\!,~\forall k\in \mathcal{K},
\end{equation}
where $P_{c}$ is the downlink transmission power of the cloud node, $B_{c}$ is the total bandwidth for the transmission between the edge nodes and the cloud. As we would discuss later, only parts of the edge nodes can be selected in each round. Therefore, we have the constraint of
\begin{equation}
\sum_{k=1}^{K}\alpha_{k}B_{c,k}\leq B_{c},
\end{equation}
where $\alpha_{k}\in\{0, 1\}$. Here, $\alpha_{k}=1, \forall k\in\mathcal{K}$ indicates edge node $k$ has been selected, and $\alpha_{k}=0, \forall k\in\mathcal{K}$ otherwise.


In this way, the upload latency from edge node $k\in\mathcal{K}$ to the cloud node can be written as
\begin{equation}
T_{c,k}^{u}=\frac {Z}{r_{c,k}^{u}}=\frac {Z}{B_{c,k} \log _{2} \left ({1+ \frac{P_{c,k}g_{c,k}}{B_{c,k}N_{0}}}\right)\!},~\forall k\in \mathcal{K}.
\end{equation}

Similarly, the downlink latency from the cloud node to edge node $k\in\mathcal{K}$ can be expressed as
\begin{equation}
T_{c,k}^{d}=\dfrac {Z}{r_{c,k}^{d}}=\dfrac {Z}{B_{c} \log _{2} \left ({1+ \frac{P_{c}g_{c,k}}{B_{c}N_{0}}}\right)\!},~\forall k\in \mathcal{K}.
\end{equation}


\subsubsection{Cloud Model Aggregation}
\begin{figure}
  \centering
  \includegraphics[width=0.45\textwidth]{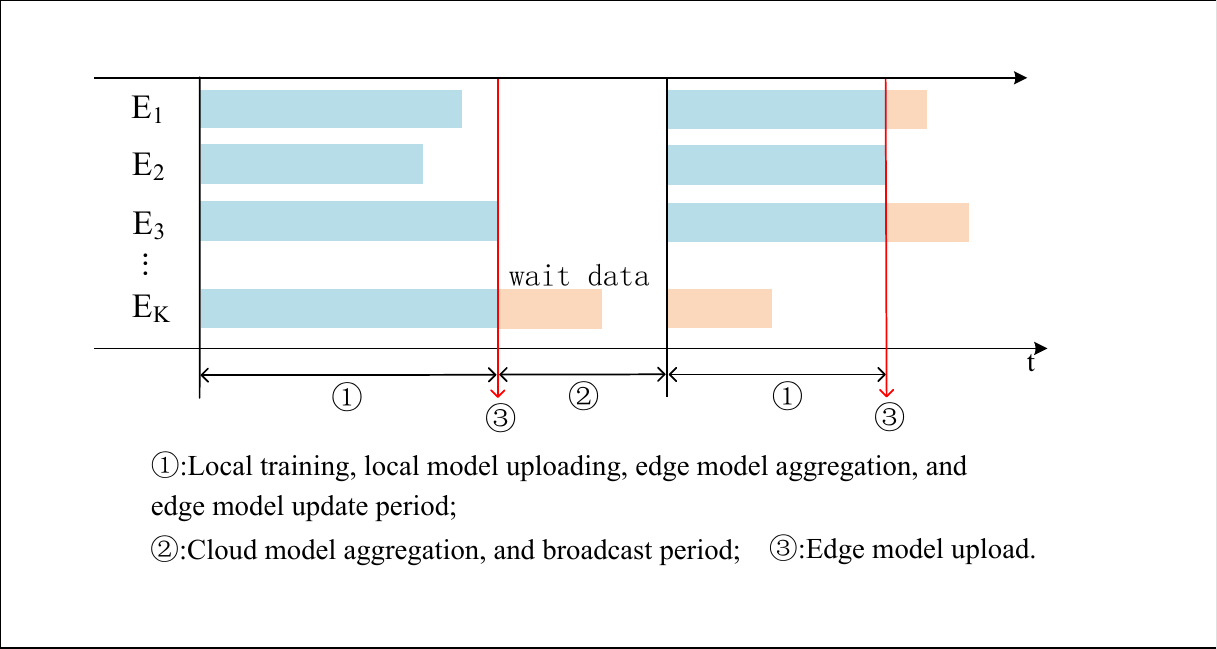}\\
  \caption{Illustration of the semi-asynchronous process.}\label{semi-asynchronous}
\end{figure}
Since these edge nodes correspond to heterogeneous local datasets, their model updated periods various. If we adopt the synchronous aggregation model, the latency for faster training nodes is unacceptable. On the contrary, the asynchronous method has shorter round latency, however, it requires several times of training rounds than the synchronous method. Therefore, in this work, we propose a flexible semi-asynchronous aggregation method by combining the merits of both synchronous and asynchronous methods. As shown in Fig. \ref{semi-asynchronous}, the cloud node would select $|\mathcal{S}^t|=\sum_{k=1}^{K}\alpha_{k}$ edge nodes with the fastest training round for model aggregation, where the set of selected edge nodes is denoted as $\mathcal{S}^t$. Slow nodes would wait for the next communication round to upload their models. Hence, under the semi-asynchronous aggregation method, we can achieve a balance between training accuracy and communication latency. The semi-asynchronous aggregation method can be written as
\begin{equation}\label{cloud_aggregation}
\boldsymbol{w}_{c}^{t}= \boldsymbol{w}_{c}^{t-1}+\sum_{k\in \mathcal{S}^t}\frac{|D_{k}|}{\sum_{k=1}^{K}|D_{k}|}(\boldsymbol{w}_{k}^{t}-\boldsymbol{w}_{c}^{t-1}).
\end{equation}

Also, we ignore the cloud model aggregation latency due to its strong computing capability.
Therefore, the cloud-edge communication latency can be derived as
\begin{equation}
 T_{k}^{cloud} = T_{c,k}^{u}+T_{c,k}^{d},~\forall k\in \mathcal{K}.
\end{equation}

Towards this end, the one-round latency for edge node $k\in\mathcal{K}$ is given by
\begin{equation}
 T_{k} = T_{k}^{edge}+T_{k}^{cloud} ,~\forall k\in \mathcal{K}.
\end{equation}

\subsection{Edge Update Model}
From Eq. \eqref{local_update}, the local updated models are determined by their own characteristics. Since the non-iid devices that connected with one edge node have a similar characteristic, the edge aggregation models are heterogeneous.
Therefore, if we directly use the cloud model to update the edge models, the personalities among edge models would be eliminated. Meanwhile, the accuracy of the cloud model would be decreased. Hence, we introduce a new edge update model based on \cite{weight-update}, which defines a weight distance formula to represent the difference among different weight relatives as
\begin{equation}
dist\left(\boldsymbol{w}^t_{k},\boldsymbol{w}^t_{c}\right) = \frac{\left|\left|\boldsymbol{w}^t_{k}-\boldsymbol{w}^t_{c}\right|\right|}{\left|\left|\boldsymbol{w}^t_{c}\right|\right|}, \forall k\in\mathcal{K}.
\end{equation}
Intuitively, the larger of $dist\left(\boldsymbol{w}_{k},\boldsymbol{w}_{c}\right)$, the greater of the model difference.

Typically, deep learning networks that consist of multiple layers and each layer contains various amounts of weights can be adopted here. For simplicity, we use a small dataset to obtain the layer with the most obvious characteristics, which has been denoted as $\mathcal{L}=\{\ell_1, \ell_2, \cdots\}$.
Thereafter, we introduce a parameter $\varepsilon_k$ to measure the difference between the cloud model and edge model $k$, which can be formulated as
\begin{equation}\label{varepsilon}
\varepsilon_k = \frac{1}{\left|\mathcal{L}\right|}\sum _{\ell\in \mathcal{L}}dist\left(\boldsymbol{w}_{k}^{t,\ell},\boldsymbol{w}_{c}^{t,\ell}\right), \forall k\in\mathcal{K},
\end{equation}
where $\boldsymbol{w}_{k}^{t, \ell}$ and $\boldsymbol{w}_{c}^{t, \ell}$ represent the weight of the $\ell$-th layer of edge model $\boldsymbol{w}^t_{k}$ and cloud model $\boldsymbol{w}^t_{c}$. Meanwhile, $\left|\mathcal{L}\right|$ is the cardinality of $\mathcal{L}$.

From Eq. \eqref{varepsilon}, $\varepsilon_k$ increases with $dist\left(\boldsymbol{w}_{k},\boldsymbol{w}_{c}\right)$.
To keep the personalities, the edge updated model can be derived by
\begin{equation}
\boldsymbol{w}_{k}^{t}\gets \varepsilon_k \boldsymbol{w}_{c}^{t}+(1-\varepsilon_k)\boldsymbol{w}_{k}^{t}, \forall k\in\mathcal{K}.
\end{equation}

\subsection{Learning Procedure of the SHFL Model}
\begin{figure}
  \centering
  \includegraphics[width=0.5\textwidth]{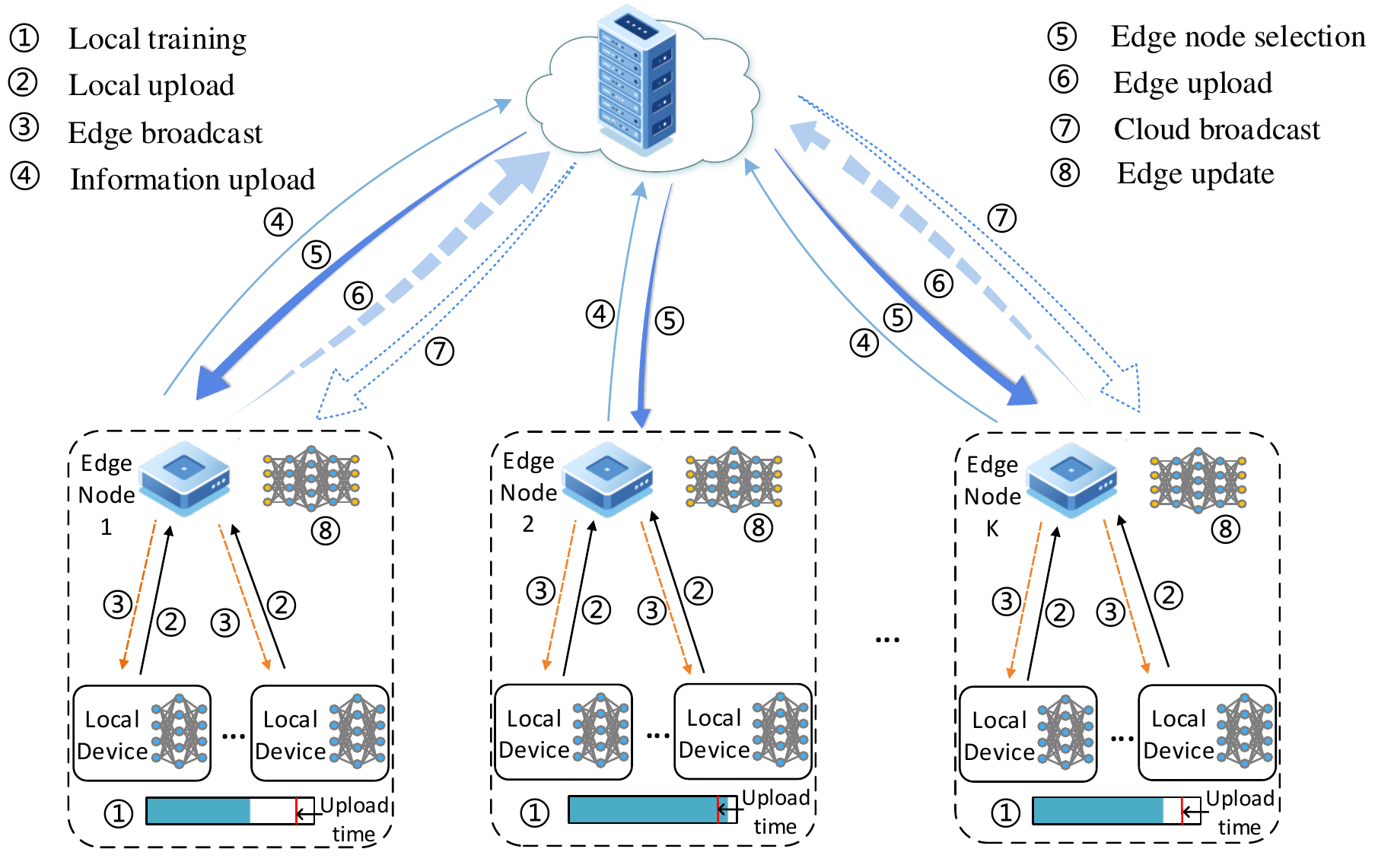}\\
  \caption{Learning Procedure of the proposed SHFL model.}\label{flow chart}
\end{figure}
Based on the definition of SHFL model, the training procedure of the SHFL model at the $t$-th iteration proceeds as follows, which is also shown in Fig. \ref{flow chart}.
\begin{enumerate}
  \item \emph{Local model training and update}: Devices in C-ITS train their learning model and calculate their local gradient as $\nabla F_{k,n}(\boldsymbol{w}_{k,n}^{t}), \forall k\in\mathcal{K}, n\in\mathcal{N}_k$.  After receiving $\boldsymbol{w}_{k}^{t}, \forall k\in\mathcal{S}^t$, devices in the selected edge nodes update their learning model based on Eq. \eqref{local_update}.

  \item \emph{Local model upload}: Local devices upload their concrete models to related edge nodes based on the local-edge bandwidth allocation scheme.

  \item \emph{Edge model aggregation}: After receiving local models, each edge node computes the average edge model based on Eq. \eqref{edge_aggregation}. Since device types among edge nodes are heterogeneous, their edge model aggregations are semi-asynchronous.

  \item \emph{Edge node selection and resource management}: Based on the reports of edge nodes, the cloud node selects a subset of edge nodes $\mathcal{S}^t$ with the fastest training round and determines the bandwidth allocation.

  \item \emph{Selected edge model upload}: The selected edge nodes upload their aggregated models to the cloud node.

  \item \emph{Cloud model aggregation and broadcast}: The cloud server aggregates the uploaded edge models, and then broadcasts the current aggregated model $\boldsymbol{w}_{c}^{t}$ to the selected edge nodes.

  \item \emph{Edge model update and broadcast}: The selected edge servers broadcast the updated model $\boldsymbol{w}_{k}^{t}$ to local related devices.
\end{enumerate}

The procedure starts from $t=1$ and repeats the above steps until convergence.

\section{Problem Formulation}
As discussed earlier, there exists a tradeoff between the training accuracy and the transmission latency. Therefore, our goal in this work is to find a balance between them to provide safety, efficiency, sustainability, and comfortable services for the SHFL based C-ITS framework.

According to Eq. \eqref{local_update}, the local model \emph{Gradient-Norm-Value} (GNV) influences the local model updating, which measures the data importance. The GNV of local device $n\in\mathcal{N}_k$ in edge node $k\in\mathcal{K}$ can be expressed as
\begin{equation}
\begin{split}
{\boldsymbol{g}}_{k,n}^{{\boldsymbol{w}},t} &= \nabla F_{k,n}( \boldsymbol{w} _{k,n}^{t})\\
&=\sum_{D_{k,n}}\frac{\partial f_{k,n}(\boldsymbol {x}_{j,k,n},y_{j,k,n}, \boldsymbol{w}_{k,n}^{t})}{\partial \boldsymbol{w}_{k,n}^{t}}, \forall k\in\mathcal{K}, n\in\mathcal{N}_k.
\end{split}
\end{equation}
With out loss of generality, we leverage the norm of GNV to present the importance, which can be written as
\begin{equation}
\sigma_{k,n}^{t}=\left|\left|{\boldsymbol{g}}_{k,n}^{{\boldsymbol{w}},t}\right|\right|^2, \forall k\in\mathcal{K}, n\in\mathcal{N}_k.
\end{equation}

Since an edge node connects homogeneous local devices, the GNVs among these local devices are approximately equal. Moreover, local devices in one edge node also have similar training duration, hence, all of these training models (GNV) would be uploaded. In this way, the GNV of edge node $k\in\mathcal{K}$ can be defined as
\begin{equation}
\sigma_{k}^{t}=\sum _{n=1}^{\mathcal{N}_k}\sigma_{k,n}^{t}, \forall k\in\mathcal{K}.
\end{equation}
On the contrary, the cloud node associates with heterogeneous edge nodes, the GNVs among them various.
Intuitively, edge nodes with significant gradients have more contributions on model updating and convergence.
Therefore, the cloud would preferentially select impactive edge nodes to upload their information for cloud model aggregation. Then, the GNV of the cloud model can be written as
\begin{equation}
\sigma^{t}=\sum_{k=1}^{K}\alpha^t_{k}\sigma^t_{k}.
\end{equation}
For easy of expression, we remove the iteration $t$ in the following.

Now, we are ready to describe the problem formulation.
The goal of this work is to maximize communication-efficient via joint edge node selection and resource allocation scheduling for a SHFL based C-ITS network. To accelerate the learning process, it is desirable to select more edge nodes with larger data importance. However, to shorten the communication and computation latency, it is better to upload as fewer edge nodes as possible. As a result, the objective function that represents the tradeoff between GNVs and transmission latency can be formulated as
\begin{eqnarray*}\label{Problem}
\hspace{1.1em} \min_{\boldsymbol{\alpha, B_{k,n}, B_{c,k}}}\left(-\rho\sum_{k=1}^{K}\alpha_{k}\sigma_{k}+(1-\rho)\max_{k\in K}\alpha_{k}T_{k}\right),
  \hspace{1.1em} \eqref{mnbs}\label{mnbs}
\end{eqnarray*}
subject to
\begin{subequations}
\begin{equation}\label{cons1}
\sum_{k=1}^{K}\sum_{n=1}^{\mathcal{N}_k}B_{k,n}\leq B_{e},
\end{equation}
\begin{equation}\label{cons2}
\sum_{k=1}^{K}\alpha_{k} B_{c,k}\leq B_{c},
\end{equation}
\begin{equation}\label{cons3}
\alpha_k\in\{0, 1\}, \forall k\in\mathcal{K},
\end{equation}
\end{subequations}
where $B_c=B-B_e$, $B$ is the total bandwidth, $\boldsymbol{\alpha}=[\alpha_1, \alpha_2, \cdots, \alpha_K]^{\mathrm{T}}$, $\boldsymbol{B_{k,n}}=[B_{k,1}, B_{k,2}, \cdots,  B_{k,N_k}]^{\mathrm{T}}$, $\boldsymbol{B_{c,k}}=[B_{c,1}, B_{c,2}, \cdots, B_{c,K}]^{\mathrm{T}}$, and $\rho\in [0,1]$ is the weight factor that controls the tradeoff between data importance and transmission latency.

Obviously, \eqref{Problem} is a MINLP problem, which is NP-hard. In the following, we would introduce an ADMM-BCU method to find the joint edge node selection and resource allocation strategy.

\section{Joint Edge Node Selection and Resource Allocation}
As known to us, all of the steps in the learning procedure are independent with the optimal scheduling decision. Denote $T_{k}^{'}=\max_{n\in \mathcal{N}_k}(T_{k,n}^{c}+T_{k,n}^{u})+T_{c,k}^{u}$, the original problem \eqref{Problem} can be rewritten as
\begin{equation}\label{P2}
\min_{\boldsymbol{\alpha, B_{k,n}, B_{c,k}}}\max_{k\in K}~\left\{-\rho\sum_{k=1}^{K}\alpha_{k}\sigma_{k}+(1-\rho)\alpha_{k}T_{k}^{'}\right\},
\end{equation}
subject to \eqref{cons1}, \eqref{cons2}, and \eqref{cons3}.

To solve the above min-max problem, we first denote $X=\max_{n\in \mathcal{N}_k}\left(T_{k,n}^{c}+T_{k,n}^{u}\right)$ and
$Y=\max_{k\in K}(-\rho\sum_{k=1}^{K}\alpha_{k}\sigma_{k}+(1-\rho)\alpha_{k}(X+T_{c,k}^{u}))$. By applying the parametric method \cite{parametric-algorithm},  \eqref{P2} can be transformed into
\begin{equation*}
\min_{{\boldsymbol{\alpha,B_{k,n},B_{c,k}}},X,Y} Y, \tag{\ref{P1}}
\end{equation*}
\begin{subequations}\label{P1}
subject to \eqref{cons1}, \eqref{cons2}, \eqref{cons3}, and
\begin{equation}\label{consX}
T_{k,n}^{c}+T_{k,n}^{u} \leq X,
\end{equation}
\begin{equation}\label{consY}
 -\rho\sum_{k=1}^{K}\alpha_{k}\sigma_{k}+(1-\rho)\alpha_{k}(X+T_{c,k}^{u}) \leq Y.
\end{equation}
\end{subequations}

Nevertheless, \eqref{P1} is still a unsolvable MINLP problem. We then introduce an auxiliary variable $\boldsymbol{\tilde{\alpha}}$ to deal with the binary vector $\boldsymbol{\alpha}$ \cite{binary}. Thereafter, \eqref{P1} can be reformulated as
\begin{equation*}
\min_{{\boldsymbol{\alpha,B_{k,n},B_{c,k}}},X,Y} Y, \tag{\ref{P3}}
\end{equation*}
\begin{subequations}\label{P3}
subject to \eqref{cons1}, \eqref{cons2}, \eqref{cons3}, \eqref{consX}, \eqref{consY}, and
\begin{equation}\label{alpha1}
\boldsymbol{\alpha} - \boldsymbol{\tilde{\alpha}} = \boldsymbol{0},
\end{equation}
\begin{equation}\label{alpha2}
\alpha_k (1-\tilde{\alpha}_{k}) = 0,
\end{equation}
\begin{equation}\label{alpha}
0 \leq \alpha_{k} \leq 1.
\end{equation}
\end{subequations}

Problem \eqref{P2} is transferred as a convex problem with equality constraints now. Hereinafter, we introduce the Augmented Lagrangian (AL) method to solve problem \eqref{P3} through penalizing and dualizing the equality constraints \eqref{alpha1} and \eqref{alpha2} as
\begin{multline}\label{AL}
\min_{\boldsymbol{\alpha},\boldsymbol{\tilde{\alpha}},\boldsymbol{B_{k,n}},\boldsymbol{B_{c,k}},X,Y}Y+ \frac{1}{2\nu}\sum_{k=1}^{K}\left[\alpha_k \left(1-\tilde{\alpha}_{k}\right)+\nu\lambda_{k}\right]^2 \\+ \frac{1}{2\nu}\sum_{k=1}^{K}\left(\alpha_k-\tilde{\alpha}_{k}+\nu\tilde{\lambda}_{k}\right)^2,
\end{multline}
subject to \eqref{cons1}, \eqref{cons2}, \eqref{consX}, \eqref{consY}, and \eqref{alpha}.

Here, $\nu$ is the non-negative penalty parameter,
and $\boldsymbol{\lambda} = [\lambda_{1},\lambda_{2},...,\lambda_{K}]$ and $\boldsymbol{\widetilde{\lambda}} = [\widetilde{\lambda}_{1},\widetilde{\lambda}_{2},...,\widetilde{\lambda}_{K}]$ denote the dual variable vectors correspond to constraints \eqref{alpha1} and \eqref{alpha2}, respectively.

Nevertheless, \eqref{AL} is still a coupled problem due to the multiply variables of $Y$, $\boldsymbol{B_{k,n}}$, and $\boldsymbol{B_{c,k}}$. Therefore, in this work, we propose a distributed ADMM-BCU algorithm that can iteratively approach a near optimal stable solution with low computational complexity. Specifically, during each iteration, \eqref{AL} is decomposed into edge node selection and resource allocation subproblems, which aim to solve the blocks of $\{\boldsymbol{\alpha}, X, Y\}$ and $\{\boldsymbol{\tilde{\alpha}}, \boldsymbol{B_{k,n}}, \boldsymbol{B_{c,k}}\}$, respectively.

\paragraph{The optimal edge node selection $\{\boldsymbol{\alpha}, X, Y\}$}

Under the fixed resource allocations block $\left\{\boldsymbol{\widetilde{\alpha}}, \boldsymbol{B_{k,n}}, \boldsymbol{B_{c,k}}\right\}$, the edge node selection optimization subproblem over the variable block $\{\boldsymbol{\alpha}, X, Y\}$ can be rewritten as
\begin{multline}\label{PX}
\min_{\boldsymbol{\alpha},X,Y} Y
+ \frac{1}{2\nu}\sum_{k=1}^{K}\left[\alpha_k \left(1-\tilde{\alpha}_{k}\right)+\nu\lambda_{k}\right]^2 \\
+ \frac{1}{2\nu}\sum_{k=1}^{K}\left(\alpha_k-\tilde{\alpha}_{k}+\nu\tilde{\lambda}_{k}\right)^2,
\end{multline}
subject to \eqref{cons2}, \eqref{consX}, \eqref{consY}, and \eqref{alpha}.

Obviously, \eqref{PX} is a convex problem, which can be solved by standard tools, such as CVX.

In what follows, we provide the closed form expression of the optimal edge node selection by introducing Lemma \ref{thm1}.
\begin{lemma}\label{thm1}
The optimum edge node selection $\boldsymbol{\alpha^*}$ can be expressed as
\begin{equation}\label{sol_alpha}
\alpha_{k}^{*}=\frac{\nu M_{k}+\nu\kappa_{k}\left(\rho\sigma_{k}-\left(1-\rho\right)\left(X+T_{c,k}^{u}\right)\right)}{\left(1-\tilde{\alpha}_{k}\right)^{2}+1}, \forall k\in\mathcal{K},
\end{equation}
where $M_{k}=\psi_{k}-\xi_{k}-\lambda_{k}\left(1-\tilde{\alpha}_{k}\right)-\tilde{\lambda}_{k}-\phi_{k}B_{c,k}+\frac{1}{\nu}\tilde{\alpha}_{k}$.
$\boldsymbol{\phi}$, $\boldsymbol{\zeta}$, $\boldsymbol{\kappa}$, $\boldsymbol{\psi}$, and $\boldsymbol{\xi}$ are Lagrangian multipliers correspond to constraints \eqref{cons2}, \eqref{consX}, \eqref{consY}, and \eqref{alpha}, which can be found by one-dimensional search methods based on the complementary slackness conditions.
\end{lemma}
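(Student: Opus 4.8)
The plan is to exploit convexity. With the resource-allocation block $\{\boldsymbol{\tilde{\alpha}},\boldsymbol{B_{k,n}},\boldsymbol{B_{c,k}}\}$ frozen, subproblem \eqref{PX} is a convex quadratic program in $\{\boldsymbol{\alpha},X,Y\}$ subject only to the linear (in)equalities \eqref{cons2}, \eqref{consX}, \eqref{consY}, and \eqref{alpha}. Consequently the KKT system is both necessary and sufficient for the global optimum, and \eqref{sol_alpha} should emerge purely from the stationarity condition in $\alpha_k$. First I would build the Lagrangian of \eqref{PX}, attaching the multiplier $\boldsymbol{\phi}$ to the edge-to-cloud bandwidth budget \eqref{cons2}, $\boldsymbol{\zeta}$ to the latency epigraph \eqref{consX}, $\boldsymbol{\kappa}$ to the objective epigraph \eqref{consY}, and the pair $\boldsymbol{\psi},\boldsymbol{\xi}$ to the relaxed box constraint \eqref{alpha}.

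The heart of the proof is the single equation $\partial L/\partial\alpha_k=0$. Differentiating the two quadratic penalty terms gives $\frac{1}{\nu}\alpha_k\left[(1-\tilde{\alpha}_{k})^2+1\right]+\lambda_k(1-\tilde{\alpha}_{k})+\tilde{\lambda}_k-\frac{1}{\nu}\tilde{\alpha}_k$, which is where the denominator $(1-\tilde{\alpha}_{k})^{2}+1$ of \eqref{sol_alpha} originates. The budget constraint adds $\phi_k B_{c,k}$, the box multipliers add $-(\psi_k-\xi_k)$, and the epigraph constraint \eqref{consY} adds $\kappa_k\left[(1-\rho)(X+T_{c,k}^{u})-\rho\sigma_k\right]$. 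Absorbing every term that is free of $\kappa_k$ into the shorthand $M_k=\psi_k-\xi_k-\lambda_k(1-\tilde{\alpha}_{k})-\tilde{\lambda}_k-\phi_k B_{c,k}+\frac{1}{\nu}\tilde{\alpha}_k$ and solving the resulting scalar linear equation for $\alpha_k$ reproduces \eqref{sol_alpha} verbatim.

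What remains is to justify that the multipliers are determined, not free: stationarity in $Y$ normalizes $\sum_k\kappa_k$ against the unit coefficient of $Y$ in the objective, while $\phi_k$, $\psi_k$, and $\xi_k$ are fixed by the active/inactive status of \eqref{cons2} and the box \eqref{alpha} through complementary slackness. Because each of these conditions is monotone in its scalar dual, a one-dimensional search recovers the multipliers, exactly as the statement claims; substituting them back into \eqref{sol_alpha} yields the closed form.

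I expect the genuine obstacle to be bookkeeping in \eqref{consY} rather than anything structurally hard. The free index $k$ in $-\rho\sum_{k}\alpha_k\sigma_k+(1-\rho)\alpha_k(X+T_{c,k}^{u})$ means one must be precise about how differentiation in $\alpha_k$ couples the constraint copies and how the per-node multiplier $\kappa_k$ attaches to \emph{both} the data-importance term $\rho\sigma_k$ and the latency term $(1-\rho)(X+T_{c,k}^{u})$; this pairing is precisely the $\kappa_k$-weighted correction in the numerator of \eqref{sol_alpha}, and mishandling it is the easiest route to a spurious factor. A secondary point worth flagging is that \eqref{sol_alpha} is ``closed form'' only modulo the dual variables, so the real content of Lemma~\ref{thm1} is the reduction of the $\boldsymbol{\alpha}$-block update to scalar root-finding over the multipliers.
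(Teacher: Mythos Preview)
Your proposal is correct and follows essentially the same route as the paper: form the Lagrangian of \eqref{PX} with multipliers $\boldsymbol{\phi},\boldsymbol{\zeta},\boldsymbol{\kappa},\boldsymbol{\psi},\boldsymbol{\xi}$, differentiate in $\alpha_k$, and solve the resulting linear equation to obtain \eqref{sol_alpha}. Your added remarks on convexity, KKT sufficiency, and the stationarity-in-$Y$ normalization are more careful than the paper's presentation but do not change the argument.
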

\begin{proof}
To find the optimal $\alpha_k, \forall k\in\mathcal{K}$, we apply for the Lagrangian dual method, which can rearrange Eq. \eqref{PX} with respect to $\alpha_k, \forall k\in\mathcal{K}$ as
\begin{equation}
\small{
\begin{split}
&L(\boldsymbol{\alpha_{k}},\boldsymbol{\phi_{k}},\boldsymbol{\zeta_{k}},\boldsymbol{\kappa_{k}},\boldsymbol{\psi_{k}})\\
&=\frac{Y}{\sum_{k=1}^{K}\sigma_{k}\alpha_{k}}+ \frac{1}{2\nu}\sum_{k=1}^{K}\left[\alpha_k \left(1-\tilde{\alpha}_{k}\right)+\nu\lambda_{k}\right]^2 \\&+ \frac{1}{2\nu}\sum_{k=1}^{K}\left(\alpha_k-\tilde{\alpha}_{k}+\nu\tilde{\lambda}_{k}\right)^2\\
&+\phi_{k}\left(\sum_{k=1}^{K}\alpha_{k}B_{c,k}-B_{c}\right)-\psi_{k}\alpha_{k}-\xi_{k}\left(1-\alpha_{k}\right)\\
&+\zeta_{k}\left(T_{k,n}^{c}+T_{k,n}^{u}-X\right)\\
&+\kappa_{k}\left(-\rho\sum_{k=1}^{K}\alpha_{k}\sigma_{k}+\left(1-\rho\right)\alpha_{k}\left(X+T_{c,k}^{u}\right)-Y\right).
\end{split}}
\end{equation}

Calculate the first-order partial derivatives with respect to $\alpha_k, \forall k\in\mathcal{K}$, we derive that
\begin{equation}
\begin{split}
&\frac{\partial L(\alpha_{k},\phi_{k},\zeta_{k},\kappa_{k},\psi_{k})}{\partial\alpha_{k}}\\&=\frac{1}{\nu}\left[\alpha_{k}\left(1-\tilde{\alpha}_{k}\right)^{2}+\alpha_{k}-\tilde{\alpha}_{k}\right]
+\lambda_{k}\left(1-\tilde{\alpha}_{k}\right)\\&+\tilde{\lambda}_{k}+\phi_{k}B_{c,k}-\psi_{k}+\xi_{k}\\
&+\kappa_{k}\left(-\rho\sigma_{k}+(1-\rho)\left[X+T_{c,k}^{u}\right]\right), \forall k\in\mathcal{K}.
\end{split}
\end{equation}

Setting $\frac{\partial L(\alpha_{k},\mu_{k},\phi_{k},\zeta_{k})}{\partial\alpha_{k}}=0$, we have
\begin{equation}\label{sol_alpha}
\alpha_{k}^{*}=\frac{\nu M_{k}+\nu\kappa_{k}\left(\rho\sigma_{k}-(1-\rho)(X+T_{c,k}^{u})\right)}{\left(1-\tilde{\alpha}_{k}\right)^{2}+1},
\end{equation}
where $M_{k}=\psi_{k}-\xi_{k}-\lambda_{k}\left(1-\tilde{\alpha}_{k}\right)-\tilde{\lambda}_{k}-\phi_{k}B_{c,k}+\frac{1}{\nu}\tilde{\alpha}_{k}$.  This ends the proof.
\end{proof}

From Lemma \ref{thm1}, we find that the edge node selection is mainly determined by the edge node importance $\sigma_{k}$ and the uplink transmission latency from edge $k$ to the cloud $T^u_{c,k}$. Intuitively, the cloud preferentially selects the edge node with either a larger edge node importance or a smaller uplink transmission latency that can improve the communication-efficiency.

\paragraph{The optimal bandwidth allocation $\left\{\boldsymbol{\widetilde{\alpha}}, \boldsymbol{B_{k,n}}, \boldsymbol{B_{c,k}}\right\}$}

Similarly, under the fixed edge node selection block $\left\{\boldsymbol{\alpha}, X, Y\right\}$, the resource allocation optimization subproblem over the block $\left\{\boldsymbol{\widetilde{\alpha}}, \boldsymbol{B_{k,n}}, \boldsymbol{B_{c,k}}\right\}$ can be rearranged as
\begin{multline*}
\min_{\boldsymbol{\widetilde{\alpha}},\boldsymbol{B_{k,n}},\boldsymbol{B_{c,k}}} \frac{1}{2\nu}\sum_{k=1}^{K}\left[\alpha_k \left(1-\tilde{\alpha}_{k}\right)+\nu\lambda_{k}\right]^2 \\+ \frac{1}{2\nu}\sum_{k=1}^{K}\left(\alpha_k-\tilde{\alpha}_{k}+\nu\tilde{\lambda}_{k}\right)^2, \tag{\ref{PB}}
\end{multline*}
\begin{subequations}\label{PB}
subject to \eqref{cons1}, \eqref{cons2}, and
\begin{equation}\label{ConstraintX_PB}
\frac{Z}{B_{k,n}\log_{2}\left(1+\frac{P_{k,n}g_{k,n}}{B_{k,n}N_{0}}\right)}+\dfrac{C_{k,n}N_{k,n}}{f_{k,n}} \leq X,
\end{equation}
\begin{multline}\label{ConstraintY_PB}
-\rho\sum_{k=1}^{K}\alpha_{k}\sigma_{k}+(1-\rho)\alpha_{k}\cdot \\
\left(X+\frac{Z}{B_{c,k}\log_{2}\left(1+\frac{P_{c,k}g_{c,k}}{B_{c,k}N_{0}}\right)}\right) \leq Y.
\end{multline}
\end{subequations}

Also, it is easy to observe that \eqref{PB} is a convex problem. For ease of analyses, we write this problem under the Lagrangian dual formulation, where \eqref{PB} can be rearranged as

\begin{equation}
\small{
\begin{split}
&L\left(\boldsymbol{\tilde{\alpha}_{k}},\boldsymbol{B_{k,n}},\boldsymbol{B_{c,k}},\boldsymbol{\beta_{k}},\boldsymbol{\varkappa_{k}},\boldsymbol{\varphi_{k}},\boldsymbol{\tau_{k}}\right)\\
&=Y+ \frac{1}{2\nu}\sum_{k=1}^{K}\left[\alpha_k \left(1-\tilde{\alpha}_{k}\right)+\nu\lambda_{k}\right]^2 \\
&+ \frac{1}{2\nu}\sum_{k=1}^{K}\left(\alpha_k-\tilde{\alpha}_{k}+\nu\tilde{\lambda}_{k}\right)^2\\
&+\beta_{k}\left(\sum_{k=1}^{K}\sum_{n=1}^{\mathcal{N}_k}B_{k,n}-B_{e}\right)+\varkappa_{k}\left(\sum_{k=1}^{K}\alpha_{k}B_{c,k}-B_{c}\right)\\
&+\varphi_{k}\left[\frac{Z}{B_{k,n}\log_{2}\left(1+\frac{P_{k,n}g_{k,n}}{B_{k,n}N_{0}}\right)}+\dfrac{C_{k,n}N_{k,n}}{f_{k,n}}-X\right]\\
&+\tau_{k}\left(-\rho\sum_{k=1}^{K}\alpha_{k}\sigma_{k}\right.\\
&\left.
+(1-\rho)\alpha_{k}\left[X+\frac{Z}{B_{c,k}\log_{2}\left(1+\frac{P_{c,k}g_{c,k}}{B_{c,k}N_{0}}\right)}\right]-Y\right),
\end{split}}
\end{equation}
where $\boldsymbol{\beta}$, $\boldsymbol{\varkappa}$, $\boldsymbol{\varphi}$, and $\boldsymbol{\tau}$ are the Lagrangian multipliers corresponding to constraints \eqref{cons1}, \eqref{cons2}, \eqref{ConstraintX_PB}, and \eqref{ConstraintY_PB}, respectively.

By taking $\frac{\partial L(\tilde{\alpha}_{k},B_{k,n},B_{c,k},\beta_{k},\varkappa_{k},\varphi_{k},\tau_{k})}{\partial B_{k,n}} = 0$, the optimal local-edge uplink bandwidth allocation $B_{k,n}^{*}$ can be derived by
\begin{multline}\label{B_kn}
\varphi_{k}\frac{Z}{\left(B_{k,n}^{*}\log_{2}\left(1+\frac{P_{k,n}g_{k,n}}{B_{k,n}^{*}N_{0}}\right)\right)^{2}}\left[\log_{2}\left(1+\frac{P_{k,n}g_{k,n}}{B_{k,n}^{*}N_{0}}\right)-\right.\\\left.\frac{P_{k,n}g_{k,n}}{\left(P_{k,n}g_{k,n}+B_{k,n}^{*}N_{0}\right)\ln2}\right]=\beta_{k}, ~\forall k\in \mathcal{K}, n \in \mathcal{N}_k.
\end{multline}
From Eq. \eqref{B_kn}, the optimal local-edge bandwidth allocation $B_{k,n}^{*}$ is mainly influenced by the related channel conditions $\frac{P_{k,n}g_{k,n}}{N_{0}}$.

\begin{algorithm}[H]
\caption{Joint edge node selection and resource allocation strategy.}
{\normalsize
\begin{algorithmic}[1]
\STATE Initialize the gradient norm value $\sigma_{k}, \forall k\in\mathcal{K}$.

\STATE Set the minimum successive divergence threshold of the objective function $\epsilon^{\min}$ and the maximum iteration number $R^{\mathrm{max}}$.

\STATE Set the iteration number $\jmath = 0$.

\STATE Initialize the auxiliary variables $\boldsymbol{\tilde\alpha^{(\jmath)}}$, $\boldsymbol{\lambda^{(\jmath)}}$, $\boldsymbol{\tilde\lambda^{(\jmath)}}$.

\STATE \textbf{While} $\epsilon^{(\jmath)} \geq \epsilon^{\min}$ and $\jmath\leq R^{\mathrm{max}}$ \textbf{do}

\STATE \hspace{3ex} Calculate the optimal device selection decision $\alpha_{k}^{*(\jmath)} $according to \eqref{sol_alpha}.
		
\STATE \hspace{3ex} Calculate the optimal bandwidth $B_{k,n}^{*(\jmath)}, B_{c,k}^{*(\jmath)}$ according to \eqref{B_kn} and \eqref{B_ck}.
		
\STATE \hspace{3ex} Obtain $\tilde{\alpha}_{k}^{*(\jmath)}$ according to \eqref{Sol_dualAlpha}.

\STATE \hspace{3ex} Update $\boldsymbol{\lambda^{(\jmath)}}$ and $\boldsymbol{\tilde\lambda^{(\jmath)}}$ according to
\begin{equation}\label{lambda}
\boldsymbol{\lambda}^{(\jmath)} = \boldsymbol{\lambda}^{(\jmath-1)} + \frac{1}{\nu}\boldsymbol{\alpha}^{*(\jmath-1)}\left(1-\boldsymbol{\tilde{\alpha}}^{*(\jmath-1)}\right),
\end{equation}
\begin{equation}\label{dualLambda}
\boldsymbol{\tilde{\lambda}}^{(\jmath)} = \boldsymbol{\tilde{\lambda}}^{(\jmath-1)} + \frac{1}{\nu}\left(\boldsymbol{\alpha}^{*(\jmath-1)}-\boldsymbol{\tilde{\alpha}}^{*(\jmath-1)}\right).
\end{equation}

\STATE \hspace{3ex} Update $\epsilon^{(\jmath)}$ according to \eqref{sigma}.

\STATE \hspace{3ex} Set $\jmath=\jmath+1$.

\STATE \textbf{End while}
\end{algorithmic}}\label{tableA}
\end{algorithm}

Alternatively, by taking $\frac{\partial L(\tilde{\alpha}_{k},B_{k,n},B_{c,k},\beta_{k},\varkappa_{k},\varphi_{k},\tau_{k})}{\partial B_{c,k}} = 0$, the optimal edge-cloud uplink bandwidth allocation $B_{c,k}^{*}$ can be obtained by
\begin{multline}\label{B_ck}
\tau_{k}\frac{\left(1-\rho\right)Z}{\left(B_{c,k}^{*}\log_{2}\left(1+\frac{P_{c,k}g_{c,k}}{B_{c,k}^{*}N_{0}}\right)\right)^{2}}\left[\log_{2}\left(1+\frac{P_{c,k}g_{c,k}}{B_{c,k}^{*}N_{0}}\right)-\right.\\\left.\frac{P_{c,k}g_{c,k}}{\left(P_{c,k}g_{c,k}+B_{c,k}^{*}N_{0}\right)\ln2}\right]=\varkappa_{k}, ~\forall k\in \mathcal{K}, n \in \mathcal{N}_k.
\end{multline}
Obviously, the optimal edge-cloud uplink bandwidth allocation $B_{c,k}^{*}$ has a similar rule with $B_{k,n}^{*}$.

Thereafter, by setting $\frac{\partial L(\tilde{\alpha}_{k},B_{k,n},B_{c,k},\beta_{k},\varkappa_{k},\varphi_{k},\tau_{k})}{\tilde{\alpha}_{k}} = 0$, we can obtain the optimal auxiliary variable $\tilde{\alpha}^*_{k}$ as

\begin{equation}\label{Sol_dualAlpha}
\tilde{\alpha}_{k}^{*}=\frac{\alpha_{k}\left(1+\alpha_{k}+\nu\lambda_{k}\right)+\nu\tilde{\lambda}_{k}}{1+\alpha_{k}^{2}},~ \forall k \in \mathcal{K}.
\end{equation}

The detailed procedure for the joint edge node selection and resource allocation scheduling is presented in Algorithm \ref{tableA}.

In Algorithm \ref{tableA}, $\epsilon^{(\jmath)}$ means the successive divergence of the objective function at the $\jmath$-th iteration \cite{binary}, which can be defined as
\begin{equation}\label{sigma}
\epsilon^{(\jmath)}=F^{(\jmath)}-F^{(\jmath-1)},
\end{equation}
where
$F= Y
+\frac{1}{2\nu}\sum_{k=1}^{K}\left[\alpha_k \left(1-\tilde{\alpha}_{k}\right)+\nu\lambda_{k}\right]^2
+\frac{1}{2\nu}\sum_{k=1}^{K}\left(\alpha_k-\tilde{\alpha}_{k}+\nu\tilde{\lambda}_{k}\right)^2.$

\section{Numerical Results}
In this section, we conduct experiments to evaluate the theoretical analyses and test the performance of the proposed algorithm.
\subsection{Experiment Settings}
\emph{CNN model settings:} For exposition, we consider the learning task of training image classifiers, which are implemented on a \emph{Convolutional Neural Network} (CNN) model, namely VGGNet $16$ \cite{CNN}. The corresponding training dataset is CIFAR-10, which contains $50000$ training images and $10000$ testing images with $10$ categories. To simulate the distributions of heterogeneous data based mobile devices, all data samples are first sorted by digital labels, and then divided into $100$ shards of size $500$ and each local device is assigned with $5$ shards.
The batch size of each local device is set as $50$ and the average quantitative bit number of each parameter is set as $16$ bits. In addition, we adopt the \emph{Stochastic Gradient Descent} (SGD) optimizer, and the learning rate for the CNN model is set as $0.1$. The computation frequency of each local device is randomly set between $2$ GHz to $4$ GHz.

\emph{Wireless communication settings:} We consider a hierarchical SHFL communication network consists of one cloud node and $10$ edge nodes. Each edge node connects with two local devices. Both edge nodes and local devices  are uniformly distributed under the coverage of the cloud node. The total bandwidth is set as $20$ MHz. Moreover, the uplink transmission powers of each local device and edge node are set as $10$ dBm and $24$ dBm, respectively. Also, the downlink transmission powers of each edge node and the cloud node are set as $10$ dBm and $24$ dBm, respectively.
Furthermore, we utilize the transmission pass loss model of $128.1 + 37.6 \log(d[\mathrm{km}])$. Meanwhile, the noise power spectral density is set as $N_0=-174$ dBm/Hz.

In the ADMM-BCU algorithm, we set the non-negative penalty parameter $\nu$ as $1$. The minimum successive divergence threshold $\epsilon^{min}$ is set as $10^{-4}$. In addition, the maximum iteration number of ADMM-BCU algorithm is set as $200$.

\subsection{SHFL Performance}
In this subsection, we present the convergence performance of the proposed SHFL model.
We first introduce the following baselines.
\begin{itemize}
  \item \emph{Random selection}: Under this circumstance, CNN is implemented with random data selection, where both $5$ and $8$ edge nodes randomly selective conditions are respectively considered.
  \item \emph{Full selection}: Under this circumstance, CNN is implemented by selecting all of the edge nodes.
  \item \emph{Normal edge update}: Edge nodes directly use the broadcast cloud model as their updated model.
\end{itemize}
\begin{figure}
\center
\subfigure[Training accuracy.]{
\begin{minipage}[b]{0.42\textwidth}
\includegraphics[width=\textwidth]{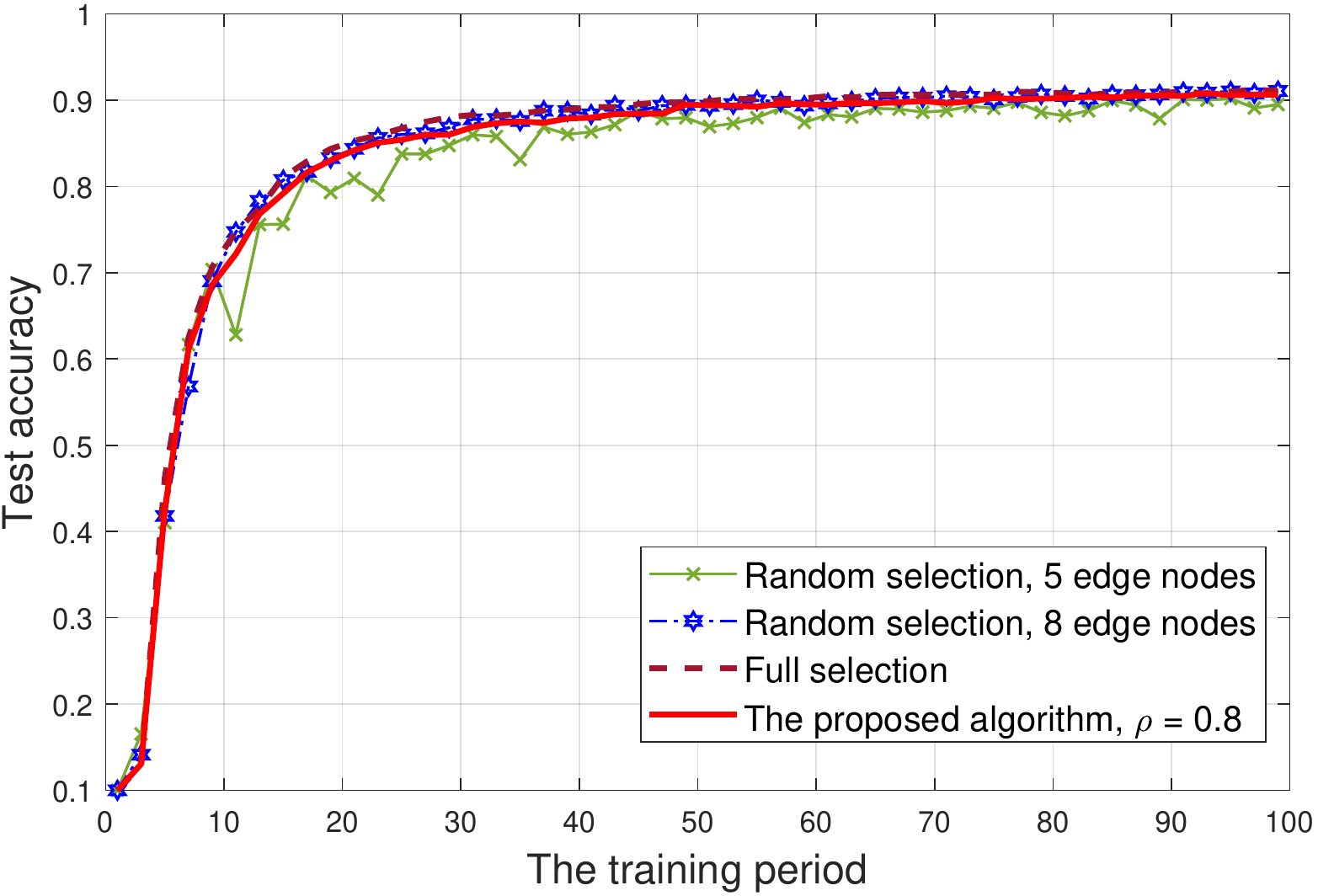}
\end{minipage}\label{conv_semi}
}
\subfigure[Training loss.]{
\begin{minipage}[b]{0.42\textwidth}
\includegraphics[width=\textwidth]{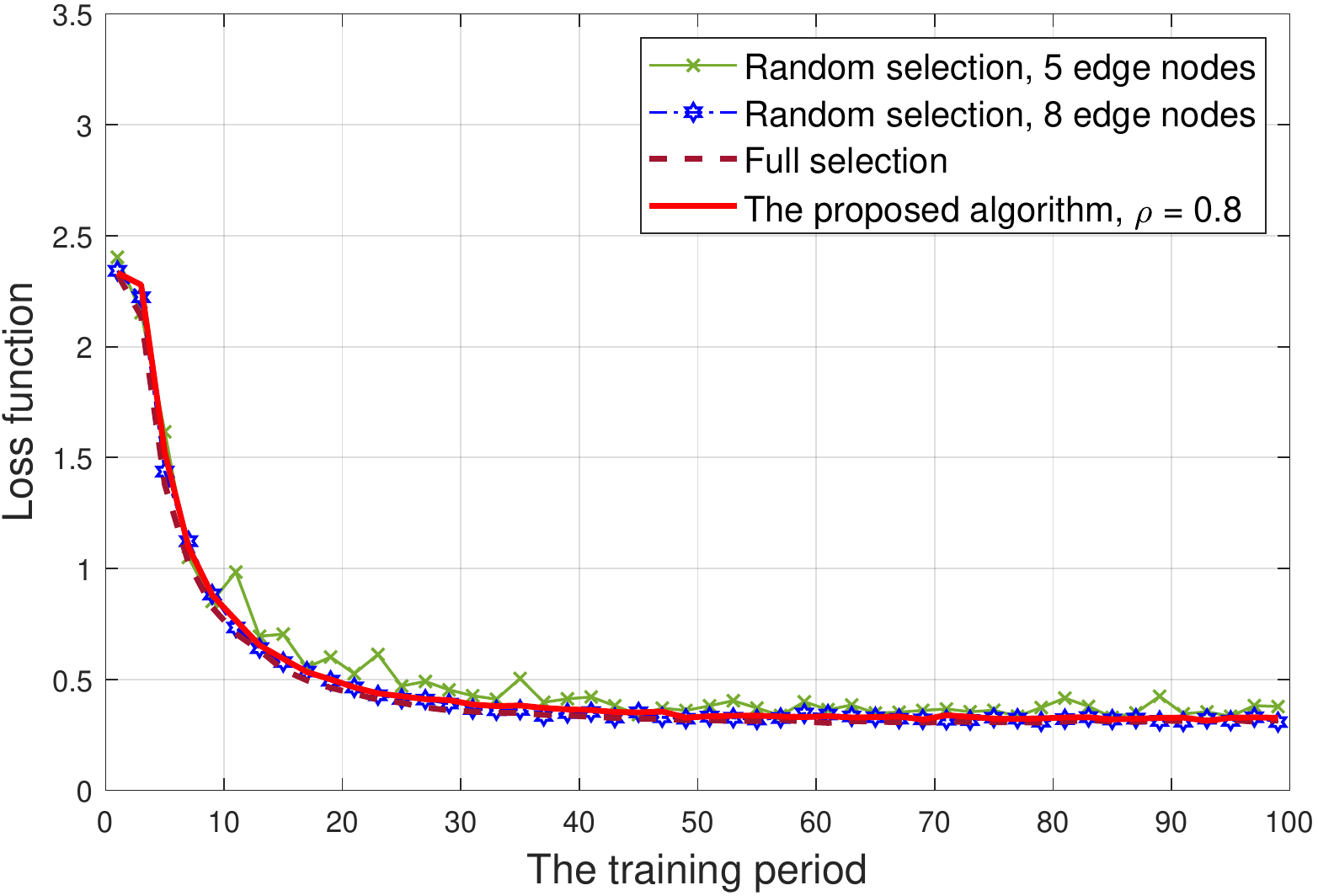}
\end{minipage}\label{conv_loss}
}
\caption{Convergence performance of the proposed CNN model under different algorithms.}\label{accuracy}
\end{figure}
For simplicity, we assume the transmissions from the selected edge nodes to the cloud node are uniformly allocated, totally $5$ MHz. Meanwhile, the transmission bandwidths from local devices to edge nodes are also set as the same, totally $15$ MHz. Moreover, we set the weighted factor $\rho$ under the proposed algorithm as $0.8$.
Fig. \ref{accuracy} shows the convergence performance of the proposed CNN based SHFL model. From this figure, we can find that the VGG-16 network starts to converge at about $70$ communication rounds for both the random selection scheme with $8$ edge nodes, the Full selection scheme, and the proposed scheme. However, the random selection scheme with $5$ edge nodes presents the worst convergence performance. Intuitively, it is because the more devices to be selected, the larger data information can be provided to the neural network, and thus faster convergence.  Moreover, due to the non-iid datasets, each node has different contributions. Therefore, the random selection scheme may play a side effect on the whole model, leading to a decreasing model accuracy.
Overall, the proposed algorithm shows a near to the full selection scheme convergence and accuracy, which can achieve better
performance than the baselines that would be discussed latter.

\begin{figure}
\center
\subfigure[Training accuracy.]{
\begin{minipage}[b]{0.42\textwidth}
\includegraphics[width=\textwidth]{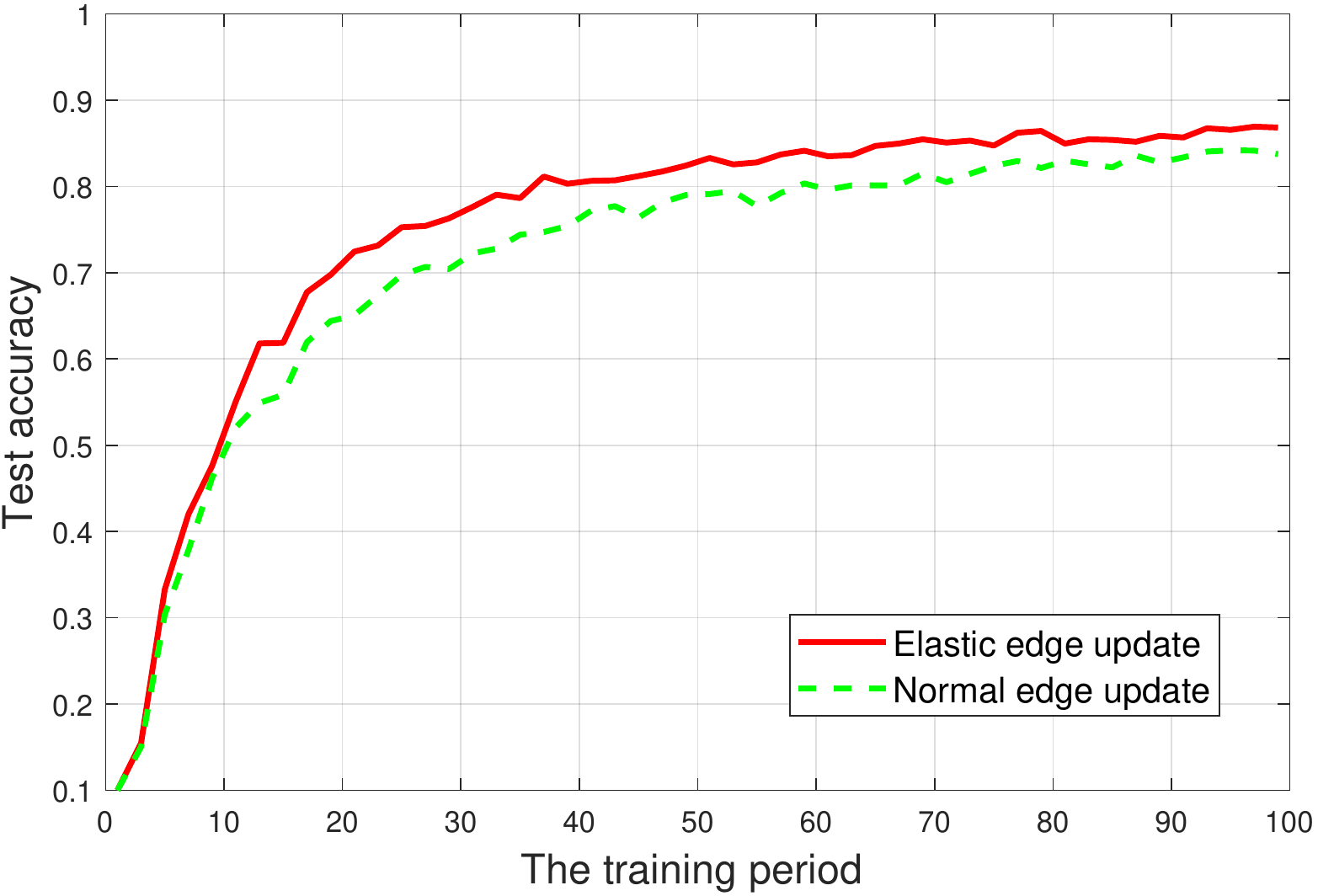}
\end{minipage}\label{conv_semi}
}
\subfigure[Training loss.]{
\begin{minipage}[b]{0.42\textwidth}
\includegraphics[width=\textwidth]{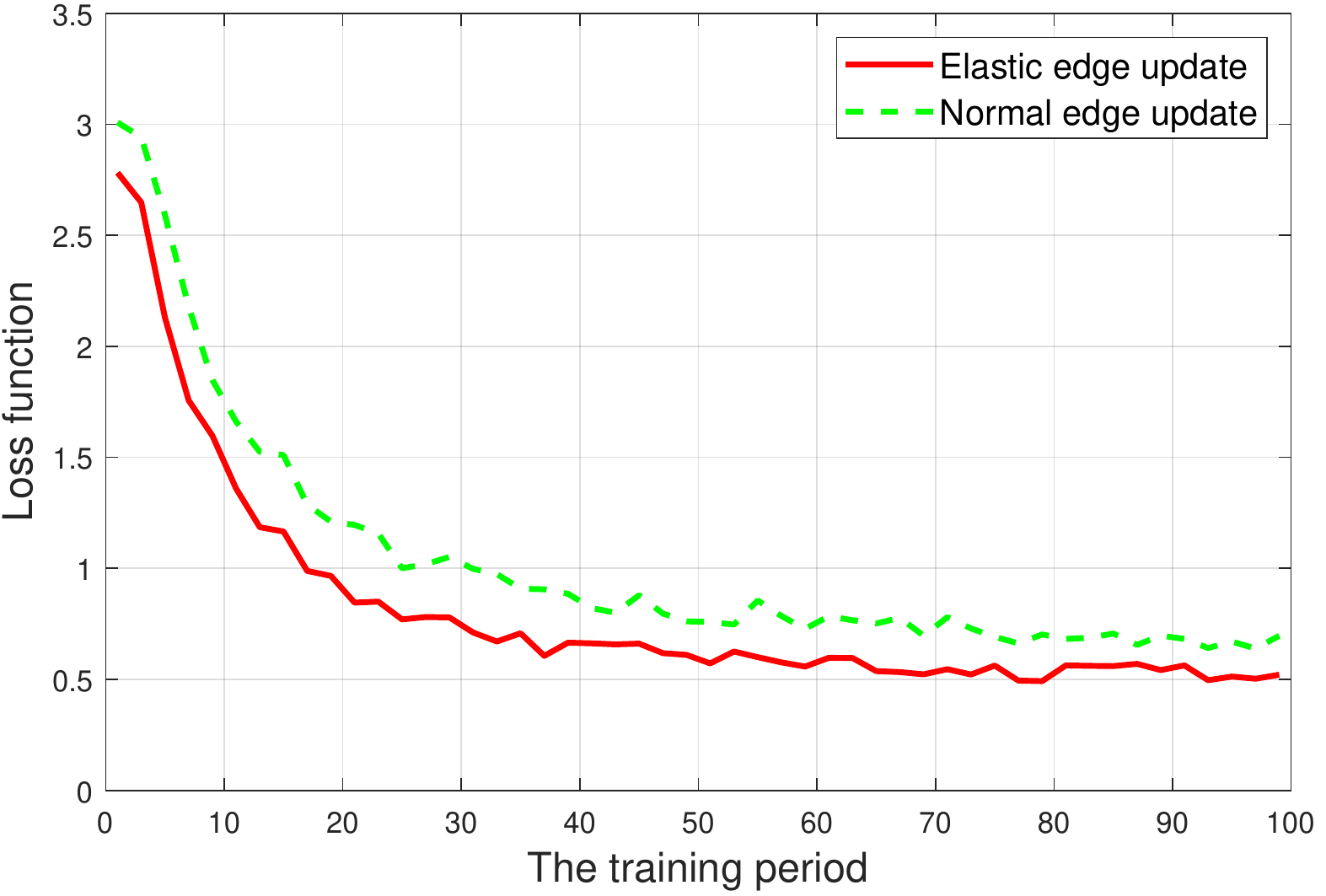}
\end{minipage}\label{conv_BCD}
}
\caption{Performance influence from the edge update model.}\label{edge_update}
\end{figure}
Fig. \ref{edge_update} presents the performance influence from the edge update model.  From this figure, we find that either the training accuracy or the training loss under the elastic edge update model is better than that of the normal edge update model. The fluctuation of these curves are mainly due to the non-iid data form. Therefore, we can conclude that the elastic edge update model is significant to keep the personalities of the edge nodes.

\subsection{The Scheduling Performance}
In this subsection, we mainly verify the scheduling performance of the proposed algorithm.
\begin{figure}
  \centering
  \includegraphics[width=0.42\textwidth]{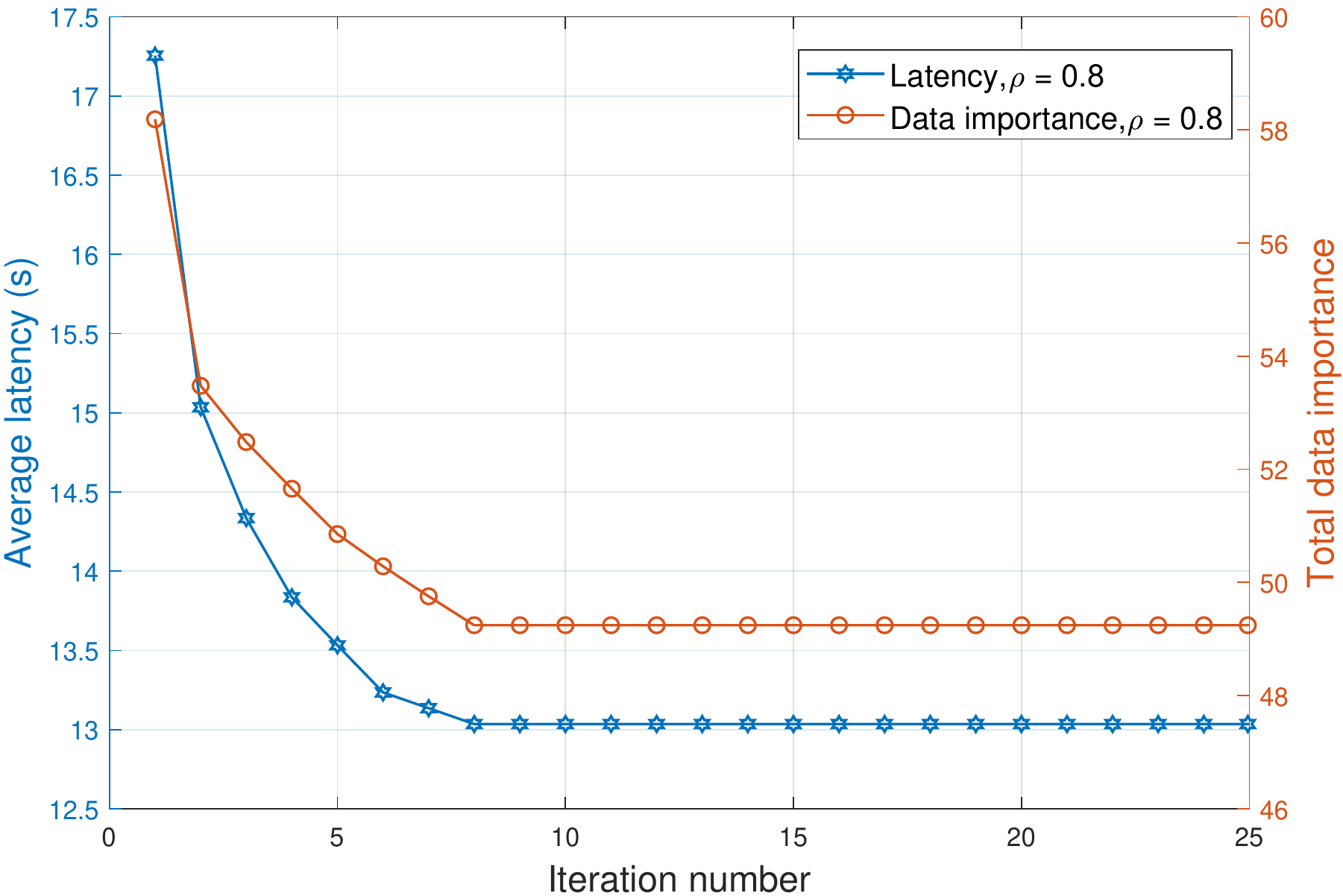}\\
  \caption{Convergence performance of the proposed ADMM-BCU algorithm.}\label{convergence}
\end{figure}
In Fig. \ref{convergence}, we shows that the proposed ADMM-BCU algorithm has a fast convergence and a low computational complexity.

\begin{figure}
  \centering
  \includegraphics[width=0.42\textwidth]{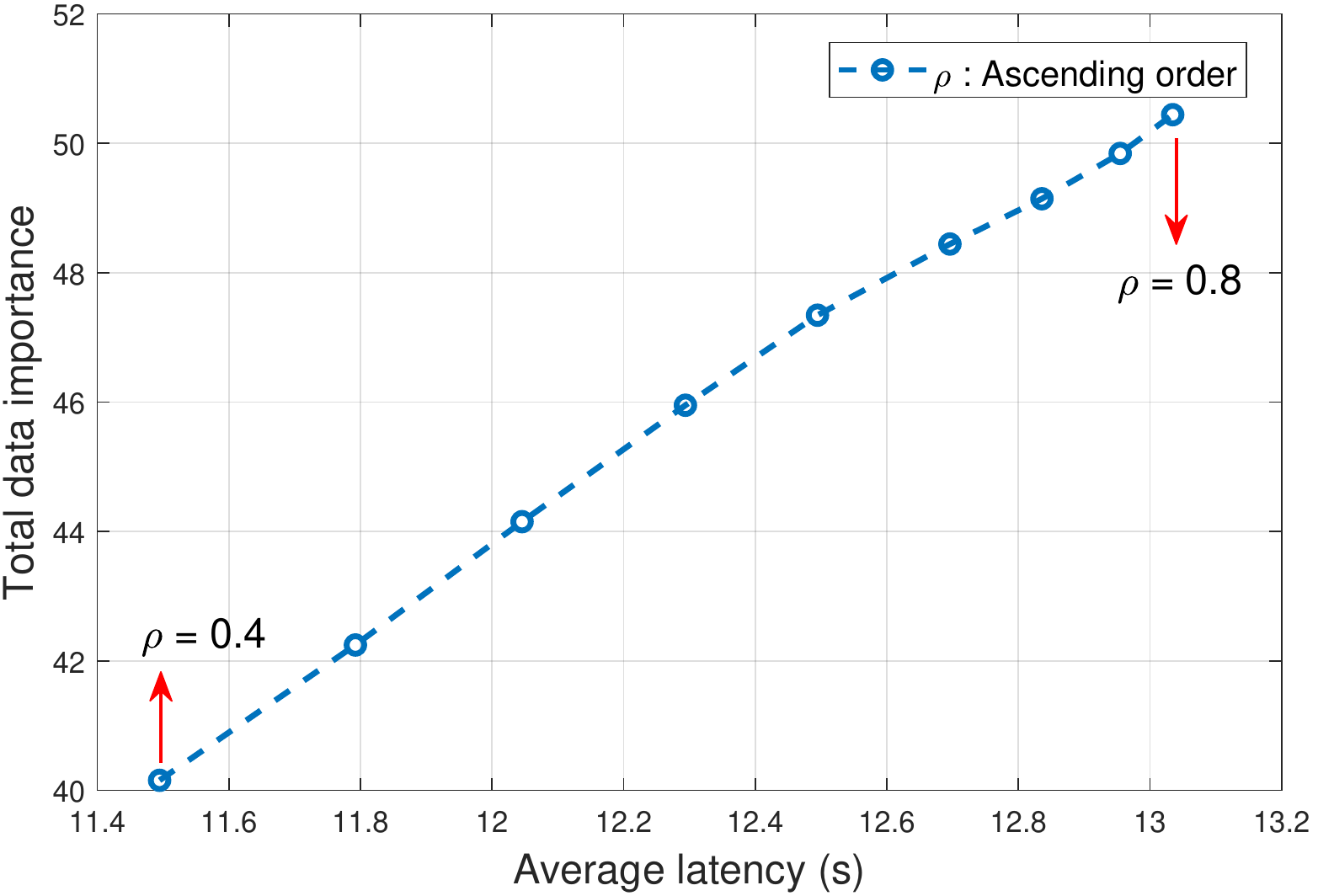}\\
  \caption{Tradeoff between data importance and delay.}\label{tradeoff}
\end{figure}
Fig. \ref{tradeoff} illustrates that a tradeoff exists between data importance and the transmission latency. The value of $\rho$ starts from $0.4$ to $0.8$ under the step of $0.05$. This figure shows that a large value of $\rho$ leads to higher data importance and longer transmission latency, and vise versa. Thus, the operators can select a suitable value of $\rho$ according to their specific requirements.

\begin{figure}
\center
\subfigure[Data importance and the number of selected edge nodes.]{
\begin{minipage}[b]{0.42\textwidth}
\includegraphics[width=\textwidth]{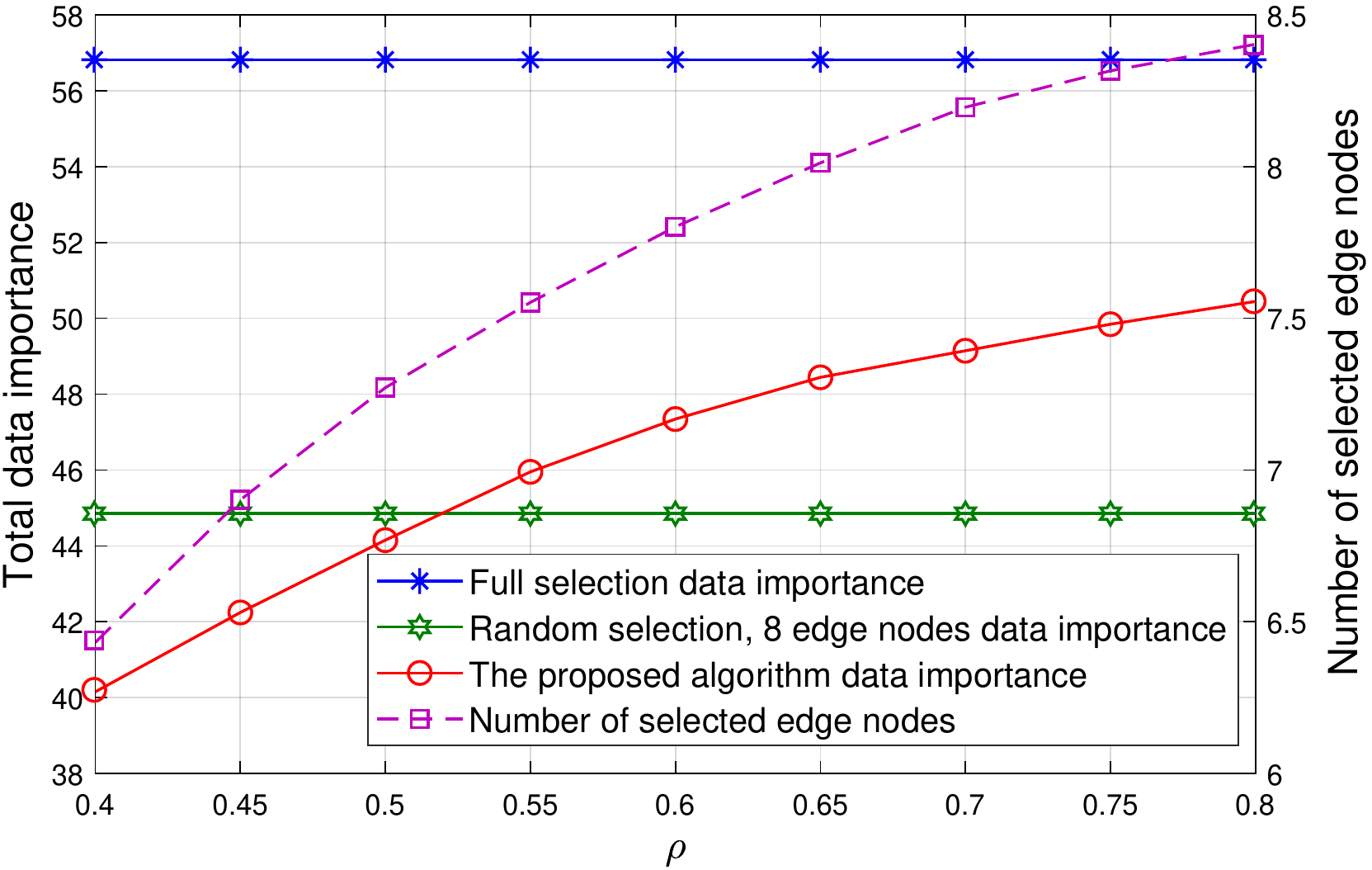}
\end{minipage}\label{conv_semi}
}
\subfigure[Latency.]{
\begin{minipage}[b]{0.42\textwidth}
\includegraphics[width=\textwidth]{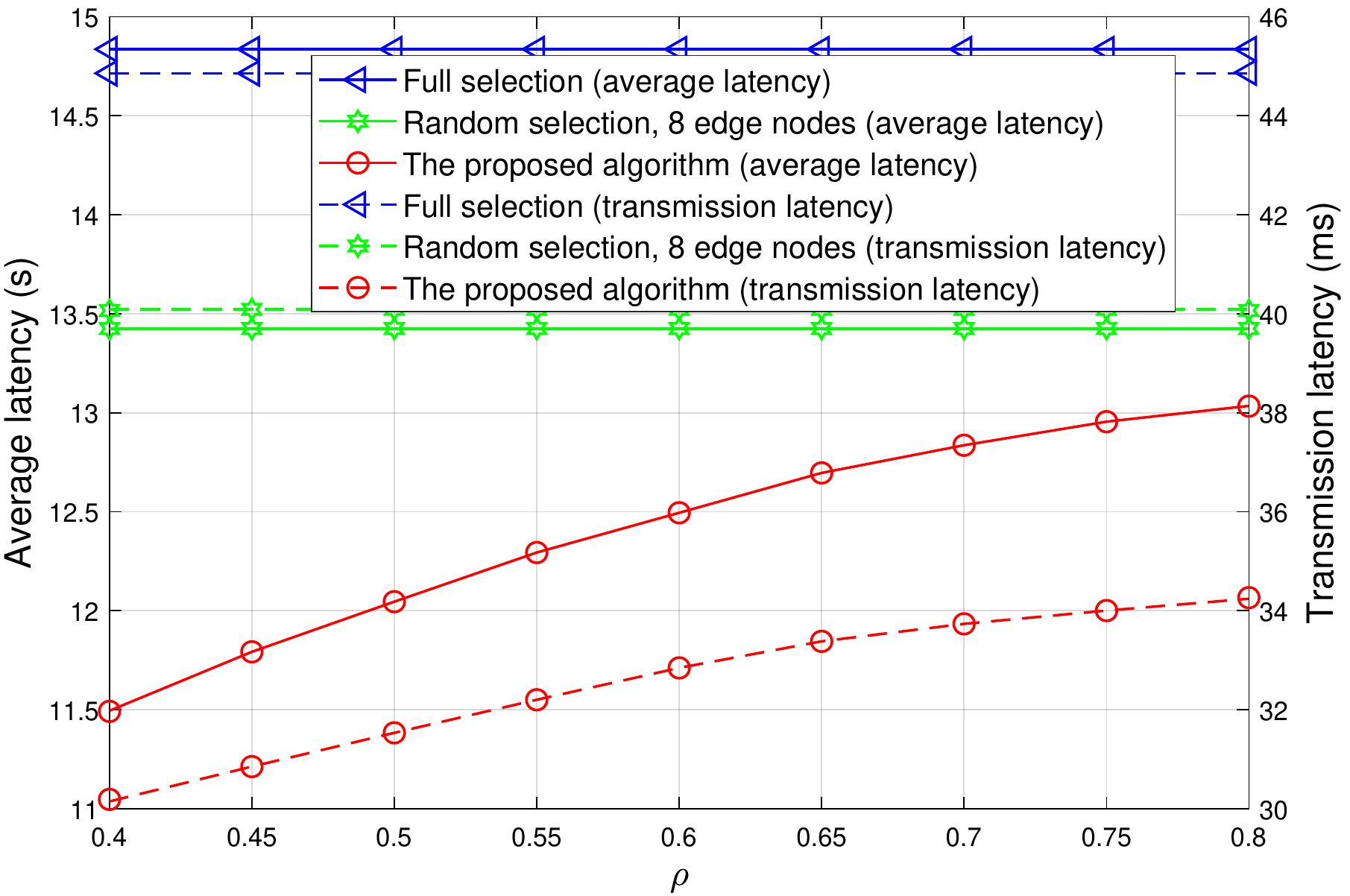}
\end{minipage}\label{conv_BCD}
}
\caption{Performance among the number of selected edge nodes, data importance, and latency under different mechanisms.}\label{performance}
\end{figure}
In Fig. \ref{performance}, we present the performance among the number of selected edge nodes, data importance, and latency under various weight factors. Fig. \ref{performance}(a) shows the number of selected edge nodes and total data importance in different weight factors under various algorithms. From this subfigure, the number of selected edge nodes increases with the weight factor $\rho$. When the value of weight factor $\rho$ is small, i.e., the associated edge nodes are small, the proposed algorithm has a lower data importance than the random selection scheme. With the increment of associated edge nodes, the circumstance changes, which has been explained in Fig. \ref{accuracy}.  However, the full selection scheme always has the highest value of data importance at the cost of higher latency, which is shown in Fig. \ref{performance}(b). Fig. \ref{performance}(b) shows the full selection scheme suffers the highest latency, and the proposed algorithm has the lowest latency after scheduling.
Intuitively, the transmission latency is much lower than the total latency, which means the data training time is huge. Moreover, the transmission latency may not meet the requirements of ultra low latency C-ITS devices. Under this circumstance, we can enlarge the wireless bandwidth by some resource management technologies.

\section{Conclusion}
This work proposes a novel SHFL framework that consists of local, edge, and cloud nodes to provide safety, traffic efficiency, and comfortable infotainment services for C-ITS. Specifically, homogeneous devices are allowed to associate with one edge node. Therefore, we adopt the synchronous aggregation model for edge nodes. On the contrary, for the heterogeneous edge aggregation models, we introduce a semi-asynchronous aggregation model for the cloud node, where parts of the fastest training edge models can be uploaded at each iteration. Moreover, we investigate an effective edge-cloud update method to keep the personalities of the edge nodes. For communication-efficiency, we propose a joint edge node association and resource allocation strategy, which illustrates a tradeoff between training accuracy and transmission latency. A distributed ADMM-BCU algorithm has been adopted to solve the proposed optimal MINLP problem. Numerical results show that our proposed scheme can accelerate the training process and improve the performance for C-ITS.


\begin{thebibliography}{1}
\bibitem{C-ITS1}
L. Chen and C. Englund, ``Cooperative ITS, EU standards to accelerate
cooperative mobility,'' in Proc. \emph{Int. Conf. Connected Vehicles Expo
(ICCVE)}, pp. 681 - 686, Nov. 2014.




\bibitem{C-ITS2}
F. Qu, F.-Y. Wang, and L. Yang, ``Intelligent transportation spaces: Vehicles, traffic, communications, and beyond,'' \emph{IEEE Commun. Mag.}, vol. 48, no. 11, pp. 136 ¨C 142, Nov. 2010.

\bibitem{C-ITS-protocol}
E. B. Hamida, H. Noura, and W. Znaidi, ``Security of cooperative intelligent
transport systems: Standards, threats analysis and cryptographic
countermeasures,'' \emph{Electronics}, vol. 4, no. 3, pp. 380 - 423, 2015.

\bibitem{C-ITS-urban}
M. Autili, L. Chen, C. Englund, C. Pompilio, and M. Tivoli, ``Cooperative intelligent transport systems: Choreography-based urban traffic coordination,'' \emph{IEEE Trans. Intell. Transp. Syst.}, vol. 22, no. 4, pp. 2088 - 2099, Apr. 2021.

\bibitem{a2}
H. Qiu, Q. Zheng, M. Msahli, G. Memmi, M. Qiu, and J. Lu, ``Topological Graph Convolutional Network-Based Urban Traffic Flow and Density Prediction,''\emph{IEEE Trans. on Intel. Trans. Sys.}, 2020




\bibitem{C-ITS-QoS}
M. A. Javed and E. B. Hamida, ``On the interrelation of security, QoS, and safety in cooperative ITS,'' \emph{IEEE Trans. Intell. Transp. Syst.}, vol. 18, no. 7, pp. 1943 - 1957, Jul. 2017.


\bibitem{C-ITS-service}
K. Zhang, A. Yang, H. Su, ''. L. Fortelle, K. Miao, and Y. Yao, ``Service-oriented cooperation models and mechanisms for heterogeneous driverless vehicles at continuous static criticlal sections,'' \emph{IEEE Trans. Intell. Transp. Syst.}, vol. 18, no. 7, pp. 1867 - 1881, Jul. 2017.


\bibitem{C-ITS-data1}
J. Zhang, F.-Y. Wang, K. Wang, W.-H. Lin, X. Xu, and C. Chen, ``Data-driven intelligent transportation systems: A survey,'' \emph{IEEE Trans. Intell. Transp. Syst.}, vol. 12, no. 4, pp. 1624 - 1639, Dec. 2011.


%
%
%





%
%
%
%
%






\bibitem{FL2}
V.~{Smith}, C.-K.~{Chiang}, M.~{Sanjabi}, and A. S.~{Talwalkar}, ``Federated multi-task learning," in \emph{Proc. Adv. Neural Inf. Process. Syst.}, pp.~4424 - 4434, 2017.

\bibitem{FL3}
H. H.~{Yang}, Z.~{Liu}, T. Q. S.~{Quek}, and H. V.~{Poor}, ``Scheduling policies for federated learning in wireless networks," \emph{IEEE Trans. Commun.}, vol.~68, no.~1, pp.~317 - 333, Jan. 2020.

\bibitem{FL4}
J.-H.~{Ahn}, O.~{Simeone}, and J.~{Kang}, ``Wireless federated distillation for distributed edge learning with heterogeneous data," \emph{arXiv preprint arXiv:1907.02745}, 2019.

\bibitem{FL5}
J.~{Kone$\breve{c}$n$\grave{y}$}, H. B.~{McMahan}, D.~{Ramage}, and P.~{Richt$\acute{a}$rik}, ``Federated optimization: Distributed machine learning for on-device intelligence," \emph{arXiv preprint arXiv:1610.02527}, 2016.

\bibitem{federated-average}
H. B. McMahan, E. Moore,D. Ramage, S. Hampson, and B. Y. A. Arcas, ``Communication-efficient learning of deep networks from decentralized data,'' \emph{Artif. Intell. Statist.}, vol. 54, pp. 1273 - 1282, Apr. 2017.



\bibitem{fedavg1}
A.~{Hard}, K.~{Rao}, R.~{Mathews}, S.~{Ramaswamy}, F.~{Beaufays}, S.~{Augenstein}, H.~{Eichner}, C.~{Kiddon}, and D.~{Ramage}, ``Federated learning for mobile keyboard prediction," \emph{arXiv preprint arXiv:1811.03604}, 2018.

\bibitem{FEL1}
Y. He, J. Ren, G. Yu, and J. Yuan, ``Importance-aware data selection and resource allocation in federated edge learning system,'' \emph{IEEE Trans. Vel. Technol.},  vol. 69, no. 11, pp. 13593 - 13605, Nov. 2020.

\bibitem{FEL2}
S. Wang, T. Tuor, T. Salonidis, K. K. Leung, C. Makaya, T. He, and K. Chan, ``Adaptive federated learning in resource constrained edge computing systems,'' \emph{IEEE J. Sel. Areas Commun.}, vol. 37, no. 6, pp. 1205 - 1221, Jun. 2019.


\bibitem{HFEL}
S. Luo, X. Chen, Q. Wu, Z. Zhou, and S. Yu, ``HFEL: Joint edge association and resource allocation for cost-efficient hierarchical federated edge learning,'' \emph{IEEE Trans. Wireless Commun.}, vol. 19, no. 10, pp. 6535 - 6548, Oct. 2020.

\bibitem{FL_MEC_survey}
W. Y. B. Lim, N. C. Luong, D. T. Hoang, Y. Jiao, Y.-C. Liang, Q. Yang, D. Niyato, and C. Miao, ``Federated learning in mobile edge networks: A comprehensive survey,'' \emph{IEEE Commun. Surveys Tuts.}, vol. 22, no. 3, pp. 2031 - 2063, Third Quarter, 2020.

\bibitem{edge-computing}
W. Shi, J. Cao, Q. Zhang, Y. Li, and L. Xu, ``Edge computing: Vision and challenges,'' \emph{IEEE Internet Things J.}, vol. 3, no. 5, pp. 637 - 646, Oct. 2016.



\bibitem{non-IID1}
S. Itahara, T. Nishio, Y. Koda, M. Morikura, and K. Yamamoto, ``Distillation-based semi-supervised federated learning for communication-efficient collaborative training with non-IID private data,'' \emph{IEEE Trans. Mobile Comput.}, early access, 2021.

\bibitem{non-IID2}
T.-C. Chiu, Y.-Y. Shih, A.-C. Pang, C.-S. Wang, W. Weng, and C.-T. Chou, ``Semisupervised distributed learning with non-IID data for AIoT service platform,'' \emph{IEEE Inter. Things J.}, vol. 7, no. 10, pp. 9266 - 9277, Oct. 2020.

\bibitem{EE-FL}
Z. Yang, M. Chen, W. Saad, C. S. Hong, and M. Shikh-Bahaei, ``Energy efficient federated learning over wireless communication networks,'' \emph{IEEE Trans. Wireless Commun.}, vol. 20, no. 3, pp. 1938 - 1949, March 2021.

\bibitem{Fed-sensing}
Y. Gao, L. Liu, X. Zheng, C. Zhang, and H. Ma, ``Federated sensing: Edge-cloud elastic collaborative learning for intelligent sensing,'' \emph{IEEE Inter. Things J.}, early access, 2021.

\bibitem{asyFL1}
C.~{Xie}, S.~{Koyejo}, and I.~{Gupta}, ``Asynchronous federated optimization," \emph{arXiv preprint arXiv:1903.03934}, 2019.

\bibitem{asyFL2}
M.~{Chen}, B.~{Mao}, and T.~{Ma}, ``Efficient and robust asynchronous federated learning with stragglers," in \emph{Submitted to International Conference on Learning Representations}, 2019.

\bibitem{asyFL3}
Y. Chen, Y. Ning, M. Slawski, and H. Rangwala, ``Asynchronous online federated learning for edge devices with non-IID data,'' in \emph{Proc. IEEE Inter. Confer. Big Data}, 2020.

\bibitem{asyFL4}
M. R. Sprague, A. Jalalirad, M. Scavuzzo, C. Capota, M. Neun, L. Do, and M. Kopp, ``Asynchronous federated learning for geospatial applications,'' in \emph{Joint European Conference on Machine Learning and Knowledge Discovery in Databases,} Springer, pp. 21 - 28, 2018.

%

%
%








\bibitem{weight-update}
Y.~{Zhao}, M.~{Li}, N.~{Lai}, N.~{Suda}, D.~{Civin}, and V.~{Chandra}, ``Federated learning with non-iid data." \emph{arXiv preprint arXiv:1806.00582}, 2018.

\bibitem{parametric-algorithm}
S. Geisser and W. M. Johnson, \emph{Modes of Parametric Statistical Inference}. Hoboken, NJ, USA: Wiley, 2006.

\bibitem{binary}
R. Guo, Y. Cai, M. Zhao, Q. Shi, and B. Champagne, ``Joint design of beam selection and precoding matrices for mmWave MU-MIMO systems relying on lens antenna arrays,'' \emph{IEEE J. Sel. Topics Signal Process.}, vol. 12, no. 2, May 2018.


\bibitem{CNN}
K. Simonyan and  A. Zisserman, ``Very deep convolutional networks for large-scale image recognition,'' \emph{arXiv preprint arXiv: 1409.1556}, 2015.


\end{thebibliography}
\end{document}